\documentclass{article}

\PassOptionsToPackage{numbers, compress}{natbib}
\usepackage[table, xcdraw]{xcolor}
 \usepackage[final]{neurips_2020}




\usepackage[utf8]{inputenc} 
\usepackage[T1]{fontenc}    
\usepackage{hyperref}       
\usepackage{url}            
\usepackage{booktabs}       
\usepackage{amsfonts}       
\usepackage{nicefrac}       
\usepackage{microtype}      

\usepackage{amsmath,amsfonts,bm}









\def\eqref#1{equation~\ref{#1}}









\def\1{\bm{1}}








\def\vc{{\bm{c}}}

\def\vu{{\bm{u}}}

\def\vx{{\bm{x}}}
\def\vy{{\bm{y}}}


\def\mA{{\bm{A}}}

\def\mE{{\bm{E}}}

\def\mI{{\bm{I}}}

\def\mL{{\bm{L}}}
\def\mLambda{{\bm{Lambda}}}
\def\mM{{\bm{M}}}

\def\mT{{\bm{T}}}
\def\mU{{\bm{U}}}
\def\mV{{\bm{V}}}
\def\mW{{\bm{W}}}
\def\mX{{\bm{X}}}

\def\mPi{{\bm{\Pi}}}

\def\mLambda{{\bm{\Lambda}}}

\DeclareMathAlphabet{\mathsfit}{\encodingdefault}{\sfdefault}{m}{sl}
\SetMathAlphabet{\mathsfit}{bold}{\encodingdefault}{\sfdefault}{bx}{n}
\newcommand{\tens}[1]{\bm{\mathsfit{#1}}}

\def\tT{{\tens{T}}}
\def\tU{{\tens{U}}}

\def\tY{{\tens{Y}}}




\def\sN{{\mathbb{N}}}


\def\emA{{A}}







\newcommand{\R}{\mathbb{R}}



\usepackage[ruled,vlined]{algorithm2e}
\usepackage{tikz}
\usepackage{amsthm}
\usepackage{enumitem}
\usepackage{subcaption}
\usepackage[toc,page]{appendix}
\usepackage{dsfont}
\graphicspath{{figures/}{../../figures/}}

\usepackage{wrapfig}
\definecolor{airforceblue}{rgb}{0.36, 0.54, 0.66}
\definecolor{red}{rgb}{0.7, 0.01, 0.1}

\newtheorem{lemma}{Lemma}
\newtheorem{theorem}{Theorem}
\newtheorem{corollary}{Corollary}
\newtheorem{proposition}{Proposition}

\newcommand{\diam}{\Delta}
\newcommand{\dmax}{d_{\text{max}}}
\newcommand{\davg}{d_{\text{avg}}}

\title{Building powerful and equivariant graph neural networks with structural message-passing}

\author{%
 Clément Vignac, Andreas Loukas, and Pascal Frossard \\
  EPFL\\
  Lausanne, Switzerland\\
  \texttt{\{clement.vignac,andreas.loukas,pascal.frossard\}@epfl.ch} \\
}

\begin{document}

\maketitle

\begin{abstract}
Message-passing has proved to be an effective way to design graph neural networks, as it is able to leverage both permutation equivariance and an inductive bias towards learning local structures in order to achieve good generalization. However, current message-passing architectures have a limited representation power and fail to learn basic topological properties of graphs. We address this problem and propose a powerful and equivariant message-passing framework based on two ideas: first, we propagate a one-hot encoding of the nodes, in addition to the features, in order to learn a \emph{local context} matrix around each node. This matrix contains rich local information about both features and topology and can eventually be pooled to build node representations. Second, we propose methods for the parametrization of the message and update functions that ensure permutation equivariance. Having a representation that is independent of the specific choice of the one-hot encoding permits inductive reasoning and leads to better generalization properties. Experimentally, our model can predict various graph topological properties on synthetic data more accurately than previous methods and achieves state-of-the-art results on molecular graph regression on the ZINC dataset.
\end{abstract}

\section{Introduction}

Graph neural networks have recently emerged as a popular way to process and analyze graph-structured data. Among the numerous architectures that have been proposed, the class of message-passing neural networks (MPNNs) \citep{scarselli2008graph,gilmer2017neural,battaglia2018relational} has been by far the most widely adopted. 
In addition to being able to efficiently exploit the sparsity of graphs, MPNNs exhibit an inherent tendency to learn relationships between nearby nodes. This inductive bias is generally considered as a good fit for problems that require relational reasoning~\citep{Xu2020What}, such as tractable relational inference~\citep{yoon2019inference,satorras2019combining}, problems in combinatorial optimization~\citep{khalil2017learning,li2018combinatorial,joshi2019efficient,karalias2020erdos} or the simulation of physical interactions between objects~\citep{battaglia2016interaction,sanchez2018graph}.

A second key factor to the success of MPNNs is their equivariance properties. Since neural networks can ultimately only process tensors, in order to use a graph as input, it is necessary to order its nodes and build an adjacency list or matrix. Non-equivariant networks tend to exhibit poor sample efficiency as they need to explicitly learn that all representations of a graph in the (enormous) symmetry group of possible orderings actually correspond to the same object. On the contrary, permutation equivariant networks, such as MPNNs, are better equipped to generalize as they already implement the prior knowledge that any ordering is arbitrary.

Despite their success, equivariant MPNNs possess limited expressive power~\citep{xu2018powerful,morris2019weisfeiler}.  
For example, they cannot learn whether a graph is connected, what is the local clustering coefficient of a node, or if a given pattern such as a cycle is present in a graph~\citep{chen2020can}. For tasks where the graph structure is important, such as the prediction of chemical properties of molecules~\citep{elton2019deep, sun2019graph} and the solution to combinatorial optimization problems, more powerful graph neural networks are necessary.


Aiming to address these challenges, this work puts forth \textit{structural message-passing} (SMP)---a new type of graph neural network that is strictly more powerful than MPNNs, while also sharing the attractive inductive bias of message-passing architectures.
SMP inherits its power from its ability to manipulate node identifiers. However, in contrast to previous studies that relied on identifiers~\citep{murphy2019relational,Loukas2020What}, it does so \emph{in a permutation equivariant way} without introducing new sources of randomness. 
As a result, SMP can be powerful without sacrificing its ability to generalize to unseen data. In particular, we show that if SMP is built out of powerful layers, the resulting model is computationally universal over the space of equivariant functions.

Concretely, SMP maintains at each node a matrix called ``local context'' (instead of a feature vector as in MPNNs) that is initialized with a one-hot encoding of the nodes and the node features. These local contexts are then propagated in such a way that a permutation of the nodes or a change in the one-hot encoding will reorder the lines of each context without changing their content, which is key to efficient learning and good generalization.


We evaluate SMP on a diverse set of structural tasks that are known to be difficult for message-passing architectures, such as cycle detection, connectivity testing, diameter and shortest path distance computation. In all cases, our approach compares favorably to previous methods: 
for example, SMP solves cycle detection in all evaluated configurations, whereas other powerful networks struggle when the graphs become larger, and MPNNs do not manage to solve the task completely. 
%

Finally, we evaluate our method on the ZINC chemistry dataset and achieve state-of-the-art performance among methods that do not use expert features. It shows that SMP is able to successfully learn both from the features and from topological information, which is essential in chemistry applications.
Overall, our method is able to overcome a major limitation of MPNNs, while retaining their ability to process features with a bias towards locality.

\paragraph{Notation.}
In the following, we consider the problem of representation learning on one or several graphs of possibly varying sizes. Each graph $G = (V, E)$ has an adjacency matrix $\mA \in \R^{n \times n}$, and potentially node attributes $\mX = (\vx_1, ..., \vx_n)^T \in \R^{n \times c_X}$ and edge attributes $\vy_{ij} \in \R^{c_Y}$ for every $(v_i,v_j) \in E$. These attributes are aggregated into a 3-d tensor $\tY \in \R^{n \times n \times c_Y}$. We consider the edge weights of weighted graphs as edge attributes and view $\mA$ as a binary adjacency matrix. The set of neighbors of a node $v_i \in V$ is written as $N_i$.

\section{Related work}

\subsection{Permutation equivariant graph neural networks}

Originally introduced by \citet{scarselli2008graph}, MPPNs have progressively been extended to handle edge \citep{gilmer2017neural} and graph-level attributes \citep{battaglia2018relational}. Despite the flexibility in their parametrization, MPNNs without special node attributes however all have limited expressive power, even in the limit of infinite depth and width.
For instance, they are at most as good at isomorphism testing as the Weisfeiler-Lehman (WL) vertex refinment algorithm~\citep{weisfeiler1968reduction, xu2018powerful}. The WL test has higher dimensional counterparts ($k$-WL) of increasing power, which has motivated the introduction of the more powerful $k$-WL networks~\citep{morris2019weisfeiler}. However, these higher-order networks are global, in the sense that they iteratively update the state of a $k$-tuple of nodes based on all other nodes (and not only neighbours), a procedure which is very costly both in time and memory. While a faster procedure was proposed in \cite{morris2020weisfeiler} concurrently to our work, key differences with SMP remain: we propose to learn richer embeddings for each node instead of one embedding per k-tuple of nodes, and build our theoretical analysis on distributed algorithms rather than vertex refinement methods.



Recent studies have also characterized the expressive power of MPNNs from other perspectives, such as the ability to approximate  continuous functions on graphs \citep{chen2019equivalence} and solutions to combinatorial problems~\citep{sato2019approximation}, highlighting similar limitations of MPNNs --- see also~ \citep{barcelo2019logical,geerts2020let,Sato2020ASO,garg2020generalization,magner2020power}.
%


Beyond higher-order message-passing architectures, there have been efforts to construct more powerful equivariant networks. One way to do so is to incorporate hand-crafted topological features (such as the presence of cliques or cycles) \cite{bouritsas2020improving}, which requires expert knowledge on what features are relevant for a given task. A more task-agnostic alternative is to build networks by arranging together a set of simple permutation equivariant functions and operators. These building blocks are:
\begin{itemize}[noitemsep, nolistsep]
	\item Linear equivariant functions between tensors of arbitrary orders: a basis for these functions was computed by \citet{maron2018invariant}, by solving the linear system imposed by equivariance.
	\item Element-wise functions, applied independently to each feature of a tensor.
	\item Operators that preserve equivariance, such as $+$, $-$, tensor and elementwise products, composition and concatenation along the dimension of the channels.
	
\end{itemize}
Similarly to \citet{morris2019weisfeiler}, networks built this way obtain a better expressive power than MPNN by using higher-order tensors \citep{kondor2018covariant, maron2018invariant}. Since $k$-th order tensors can represent any $k$-tuple of nodes, architectures manipulating them can exploit more information to compute topological properties (and be as powerful as the $k$-WL test).
Unfortunately, memory requirements are exponential in the tensor order, which makes these methods of little practical interest. More recently, \citet{maron2019provably} proposed provably powerful graph networks (PPGN) based on the observation that the use of matrix multiplication can make their model more expressive for the same tensor order. This principle was also used in the design of Ring-GNN \citep{chen2019equivalence}, which has many similarities with PPGN. Key differences between such methods and ours are that (\textit{i}) SMP can be parametrized to have a lower time complexity, due to the ability of message-passing to exploit the sparsity of adjacency matrices, (\textit{ii})
SMP retains the message-passing inductive bias, which is different from PPGN and, as we will show empirically, makes it better suited to practical tasks such as the detection of substructures in a graph. 

\subsection{Non-equivariant graph neural networks}

In order to better understand the limitations of current graph neural networks, analogies with graph theory and distributed systems have been exploited. In these fields, a large class of problems cannot be solved without using node identifiers \citep{alon1995color, suomela2013survey}. The reasoning is that, in message-passing architectures, each node has access to a local view of the graph created by the reception of messages.
Without identifiers, each node can count the number of incoming messages and process them, but cannot tell from how many unique nodes they come from. They are therefore unable to reconstruct the graph structure.


This observation has motivated researchers to provide nodes with randomly selected identifiers~\citep{murphy2019relational,Loukas2020What,dasoulas2019coloring, sato2020random}. Encouragingly, by showing the equivalence between message-passing and a model in distributed algorithms, \citet{Loukas2020What} proved that graph neural networks with identifiers and sufficiently expressive message and update functions can be Turing universal, 
which was also confirmed on small instances of the graph isomorphism problem~\citep{loukas2020hard}.

 
Nevertheless, the main issue with these approaches is sample efficiency. Identifiers introduce a dependency of the network on a random input and the loss of permutation equivariance, causing poor generalization. Although empirical evidence has been presented that the aforementioned dependency can be overcome with large amounts of training data or other augmentations~\citep{murphy2019relational,loukas2020hard}, overfitting and optimization issues can occur.
In this work, we propose to overcome this problem by introducing a network which is both powerful and permutation equivariant.

\section{Structural message-passing} \label{section:SMP}

We present the structural message-passing neural networks (SMP), as generalization of MPNNs that follows a similar design principle. However, rather than processing vectors with permutation invariant operators, SMP propagates \textit{matrices} and processes them in a permutation equivariant way. This subtle change greatly improves the network's ability of to learn information about the graph structure.

\subsection{Method}

In SMP, each node of a graph maintains a \textit{local context} matrix $\bm{U}_i \in \R^{n \times c}$ rather than a feature vector $\bm{x}_i \in \R^{c}$ as in MPNN.
The $j$-th row of $\mU_i$ contains the $c$-dimensional representation that node $v_i$ has of node $v_j$.
Intuitively, equivariance means that the lines of the local context after each layer are simply permuted when the nodes are reordered, as shown in Fig. \ref{fig:smp}.

\begin{figure}
	\centering
	\includegraphics[width=0.75\textwidth]{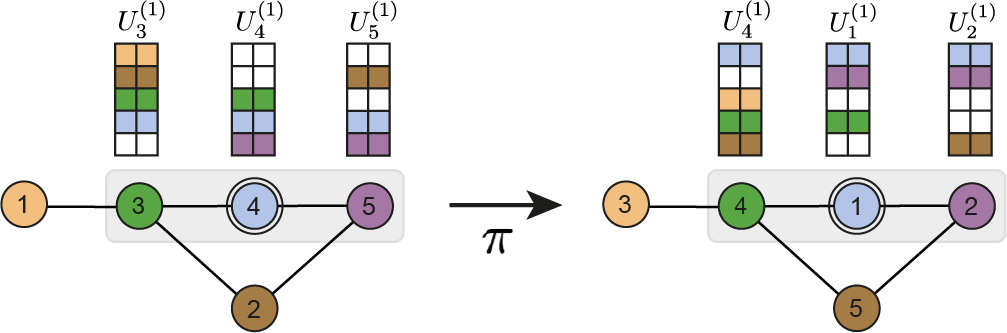}
	\caption{In the SMP model, each local context $\mU_i^{(l)}$ is an $n \times c_l$ matrix, with each row storing the $c_l$-dimensional representation of a node (denoted by color).
	The figure shows the local context in the output of the first layer and blank rows correspond to nodes that have not been encountered yet. Upon node reordering, the lines of the local context are permuted but their content remains unchanged.	\vspace{-4mm}} \label{fig:smp}
\end{figure}

\paragraph{Initialization} The local context is initialized as a one-hot encoding $\bm{U}_i^{(0)}= \bm{\mathds{1}}_i \in \R^{n\times 1}$ for every $v_i \in V$, which corresponds to having initially a unique identifier for each node. In addition, if there are features $\vx_i$ associated with node $v_i$, they are appended to the same row of the local context as the identifiers: $\mU_i^{(0)}[i, :] = [1, \vx_i] \in \R^{1 + c_X}$. Later, we will show that when SMP is parametrized in the proper way, the ordering induced by the one-hot encoding is actually irrelevant to the output.  

%
%

\paragraph{Layers} At layer $l+ 1$, the state of each node is updated as in standard MPNNs \citep{battaglia2018relational}: messages are computed on each edge before being aggregated into a single matrix via a symmetric function. The result  can then be updated using the local context of previous layer at this node:
\begin{align*}
\bm{U}_i^{(l+1)} = u^{(l)}\hspace{-2px}\left(\mU_i^{(l)},  \tilde{\mU}_i^{(l)} \right) \in \mathbb{R}^{n \times c_{l+1}} \quad \text{with} \quad \tilde{\mU}_i^{(l)} = \phi\left(\left\{m^{(l)}(\bm{U}_i^{(l)}, \bm{U}_j^{(l)}, \bm{y}_{ij} ) \right\}_{v_j \in N_i}\right)
\end{align*}
Above, $u^{(l)}$, $m^{(l)}$, $\phi$ are the \textit{update}, \textit{message} and \textit{aggregation} functions of the $(l+1)$-th layer, respectively, whereas $c_{l+1}$ denotes the layer's width.

It might be interesting to observe that, starting from a one-hot encoding and using the update rule $\mU_i^{(l+1)} = \sum_{v_j \in N_i} \mU_j^{(l)}$,  SMP iteratively compute powers of $\mA$. Since $\mA^l[i,j]$ corresponds to the count of walks of length $l$ between $v_i$ and $v_j$, there is a natural connection between the propagation of identifiers and the detection of topological features: even with simple parametrizations, SMP can manipulate polynomials in the adjacency matrix and therefore learn spectral properties \citep{sandryhaila2014big} that MPNNs cannot detect. 

In the following, it will be convenient to express each SMP layer $f^{(l)}$ in a tensor form:
$$
\tU^{(l+1)} = f^{(l)}(\tU^{(l)}, \tY, \mA) = [\bm{U}_1^{(l+1)}, \ldots, \bm{U}_n^{(l+1)}] \ \in \ \mathbb{R}^{n \times n \times c_{l + 1}} 
$$
%

\paragraph{Pooling} After all $L$ message-passing layers have been applied, the aggregated contexts $\tU^{(L)}$ can be pooled to a vector or to a matrix (e.g, for graph and node classification, respectively). To obtain an equivariant representation for node classification, we aggregate each $\mU_i^{(L)} \in \R^{n \times c_L}$ into a vector using an equivariant neural network for sets $\sigma$ \citep{zaheer2017deep, qi2017pointnet, lee2019set, segol2019universal} applied simultaneously to each node $v_i$: 
\[
f_{\textit{eq}} (\tU^{(0)}, \tY, \mA) = \sigma \circ f^{(L)} \circ \cdots \circ f^{(1)}(\tU^{(0)}, \tY, \mA) \ \in \ \R^{n \times c},
\]
whereas a permutation invariant representation is obtained after the application of a pooling function $\textit{pool}$. It may be a simple sum or average followed by a soft-max, or a more complex operator \citep{ying2018hierarchical}: 
\[
f_{\textit{inv}}(\tU^{(0)}, \tY, \mA) = \textit{pool} \circ f_{\textit{eq}} (\tU^{(0)}, \tY, \mA) \ \in \ \R^c 
\]
\subsection{Analysis}

The following section characterizes the equivariance properties and representation power of SMP. For the sake of clarity, we defer all proofs to the appendix. 

\paragraph{Equivariance} 
Before providing sufficient conditions for permutation equivariance, we define it formally. A change in the ordering of $n$ nodes can be described by a permutation $\pi$ of the symmetric group $\mathfrak{S}_n$. $\pi$ acts on a tensor by permuting the axes indexing nodes (but not the other axes): 
\[ 
    (\pi ~.~ \tU)[i, j, k] = \tU[\pi^{-1}(i),~ \pi^{-1}(j), k], \quad \text{where} \quad \tU \in \R^{n \times n \times c}
\]
For vector and matrices, the action of a permutation is more easily described using the matrix $\mPi$ canonically associated to $\pi$: $\pi . z = z $ for $z \in \R^c$, $\pi. \mX = \mPi^T \mX$ for $\mX \in \R^{n \times c}$, and $\pi . \mA = \mPi^T \mA ~\mPi$ for $\mA \in \R^{n \times n}$. 
An SMP layer $f$ is said to be permutation equivariant if permuting the inputs and applying $f$ is equivalent to first applying $f$ and then permuting the result:
\[
 \forall \pi \in \mathfrak{S}_n, \quad \pi ~.~  f(\tU, \tY, \mA) =  f( \pi .\tU, \pi. \tY, \pi. \mA))
 \]
 
We stress that an equivariant SMP network should yield the same output (up to a permutation) for every one-hot encoding used to construct the node identifiers. We can now state some sufficient conditions for equivariance:
\begin{theorem}[Permutation equivariance] \label{theorem:equivariance}
Let functions $m$, $\phi$ and $u$ be permutation equivariant, that is, for every permutation $\pi \in \mathfrak S_n$ we have $u({\pi} . \mU, \pi . \mU') = {\pi} .  u(\mU, \mU') $, $\phi(\{ \pi . \mU_j \}_{v_j \in N_i}) = \pi. \phi(\{ \mU_j \}_{v_j \in N_i})$, and 
$m(\pi . \mU, \pi. \mU', \vy) = {\pi}.m(\mU, \mU', \vy)$. Then, SMP is permutation equivariant. 
\end{theorem}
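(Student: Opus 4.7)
The plan is to prove the statement by induction on the number of SMP layers, so the crux reduces to showing that a single layer $f$ is permutation equivariant; composition of equivariant maps is then equivariant, and the theorem follows for the full network. I will fix an arbitrary permutation $\pi \in \mathfrak{S}_n$ and carefully unpack what $\pi . \tU$, $\pi . \tY$ and $\pi . \mA$ mean at the level of individual local context matrices, edges, and neighborhoods. Writing the permuted input as $\tU'$, $\tY'$, $\mA'$, and setting $k = \pi^{-1}(i)$, the key identity I will use is that on the first axis $\mU'_i = \pi . \mU_k$ (the local context that used to sit at node $k$, with its rows also permuted by $\pi$), that $\vy'_{ij} = \vy_{k,\pi^{-1}(j)}$, and that the new neighborhood is $N'_i = \pi(N_k)$.

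With this bookkeeping in place, the forward computation of $f$ on the permuted inputs gives
\[
\mU'^{(l+1)}_i \;=\; u\!\left(\mU'_i,\;\phi\!\left(\{m(\mU'_i, \mU'_j, \vy'_{ij})\}_{v_j \in N'_i}\right)\right).
\]
I will then substitute $j = \pi(j')$ with $j' \in N_k$ in the multiset indexing (this step is free because $\phi$ operates on a multiset), apply the assumed equivariance of $m$ to pull $\pi$ out of each message, then apply the equivariance of $\phi$ to pull $\pi$ out of the aggregation, and finally apply the equivariance of $u$ to pull $\pi$ out of the update. The result is exactly $\pi . \mU_k^{(l+1)} = (\pi . \tU^{(l+1)})_i$, which is the definition of $f$ being equivariant.

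For the pooling step in $f_{\textit{eq}}$ and $f_{\textit{inv}}$, I will note that $\sigma$ is a row-equivariant set network applied independently to each $\mU_i^{(L)}$, so it inherits the same equivariance property on the first axis, while $\textit{pool}$ is invariant by construction; composing these with the equivariant stack of SMP layers yields the claimed equivariance (resp. invariance).

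The only genuinely delicate part of the argument is the double role of $\pi$: it permutes nodes (acting on the first axis of $\tU$ and on the adjacency), but it must \emph{also} permute the rows of every local context matrix, because each local context is itself indexed by nodes. Getting this dual action straight in the notation, and verifying that the hypotheses on $m$, $\phi$, $u$ are exactly what is needed to commute $\pi$ past each of these operations, is the main obstacle; once it is laid out explicitly, the rest of the proof is bookkeeping and an elementary induction on $l$.
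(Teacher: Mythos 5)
Your argument is correct and follows essentially the same route as the paper's proof: unpack the permutation action on the local contexts, edge features and neighborhoods, re-index the neighborhood multiset using bijectivity of $\pi$, and pull $\pi$ successively through $m$, $\phi$ and $u$ to conclude layer-wise equivariance, with equivariance of the full network following by composition. The only difference is cosmetic (you work with $\pi$ and $k=\pi^{-1}(i)$ where the paper writes the same identities using $\pi^{-1}$, and you spell out the composition/pooling step that the paper leaves implicit).
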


The proof is presented in Appendix \ref{proof:theorem1}. This theorem defines the class of functions that can be used in our model. For example, if the message and update functions are operators applied simultaneously to each row of the local context, the whole layer is guaranteed to be equivariant. However, more general functions can be used: each $\mU_i$ is a $n \times c$ matrix which can be viewed as the representation of a set of nodes. Hence, any equivariant neural network for sets can be used, which allows the network to have several desirable properties:
\begin{itemize}[noitemsep, nolistsep]
    \item \textit{Inductivity}: as an equivariant neural network for sets can take sets of different size as input, SMP can be trained on graphs with various sizes as well. Furthermore, it can be used in inductive settings on graphs whose size has not been seen during training, which we will confirm experimentally.
    \item \textit{Invariance to local isomorphisms}: SMP learns \textit{structural embeddings}, in the sense that it yields the same result on isomorphic subgraphs. More precisely, if the subgraphs $G_i^k$ and $G_j^k$ induced by $G$ on the k-hop neighborhoods of $v_i$ and $v_j$ are isomorphic, then on node classification, any $k$-layer SMP $f$  will yield the same result for $v_i$ and $v_j$. This  is in contrast with several popular methods \citep{zhang2018link, you2019position} that learn \textit{positional embeddings} which do not have this property.
\end{itemize}

\paragraph{Representation and expressive power} 
The following theorem characterizes the representation power of SMP when parametrized with powerful layers. Simply put, Theorem~\ref{theorem:representation_power_informal} asserts that it is possible to parameterize an SMP network such that it maps non-isomorphic graphs to different representations:

\begin{theorem}[Representation power -- informal]
Consider the class $S$ of simple graphs with $n$ nodes, diameter at most $\diam$ and degree at most $\dmax$. 
We assume that these graphs have respectively $c_X$ and $c_Y$ attributes on the nodes and the edges. Then, there exists a SMP network $f$ of depth at  most $\diam + 1$ and width at most $2 \dmax + c_X + n ~c_Y$ such that the full structure of any graph in $S$ (with the attributes) can be recovered from the output of $f$ at any node.

\label{theorem:representation_power_informal}
\end{theorem}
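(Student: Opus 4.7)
The plan is constructive. I will build an SMP network of depth $\diam + 1$ whose message, aggregation and update functions act row-wise on the local contexts; this automatically gives permutation equivariance via Theorem~\ref{theorem:equivariance} and ensures that the output depends on the initial one-hot encoding only up to a permutation of the rows of each $\mU_i$.

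The central device is an inductive invariant maintained over the layers: after $l$ message-passing steps, the local context $\mU_i^{(l)}$ stores, for every node $v_j$ at graph-distance at most $l$ from $v_i$, a ``record'' in row $j$ that summarizes $v_j$'s attributes and local structure, while for nodes farther than $l$ hops away, row $j$ is zero. I would design the record in row $j$ to contain (i) a ``visited'' indicator, (ii) the degree of $v_j$, (iii) the node attribute $\vx_j \in \R^{c_X}$, and (iv) the concatenated edge-attribute vector $(\vy_{j,1},\ldots,\vy_{j,n}) \in \R^{n\,c_Y}$ with zeros marking non-edges. The remaining $\leq 2\dmax$ channels serve as scratch for two bookkeeping steps performed by the update function: counting incoming messages in order to recover degrees, and implementing a row-wise ``set-union'' that copies a freshly visited row from one of at most $\dmax$ neighbours into $v_i$'s own context without disturbing already-filled rows.

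Base case $l=0$ is the initialization, which writes $[1,\vx_i]$ in row $i$ of $\mU_i^{(0)}$ and leaves every other row zero. At layer $1$, each neighbour's message lets $v_i$ populate the attribute and edge-attribute blocks of its own row $i$ (from which $v_i$'s degree is read off by counting messages), and also the visited and attribute slots of row $k$ for each neighbour $v_k$. For $l\ge 2$, a row-wise summation over neighbours' contexts followed by a shifted ReLU in the scratch block activates any row that some neighbour has already seen but $v_i$ has not, and leaves already-filled rows intact; the $\dmax$ bound on the fan-in is what lets this ``logical OR'' be realized by sum-and-threshold inside $2\dmax$ channels. Crucially, the $n\,c_Y$ edge-attribute block in row $j$ is fully assembled in $v_j$'s own context by layer $1$ and is then carried unmodified along any shortest $v_j \to v_i$ path, so it reaches $v_i$ by layer $1 + \mathrm{dist}(v_i,v_j) \leq 1 + \diam$. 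Thus after $\diam + 1$ layers every row of every $\mU_i$ is populated.

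Recovery at any node $v_i$ is then immediate: the visited indicators identify the whole vertex set, the $c_X$-blocks give the node attributes, and the $n\,c_Y$-blocks give both the adjacency pattern (by non-zero columns) and the edge attributes. Relabelling the initial one-hot encoding only permutes the rows of $\mU_i$ without changing their contents, so the reconstructed graph is well-defined up to isomorphism. The main obstacle I anticipate is tightening the bookkeeping so that degree counting and the row-wise set-union both fit into the allotted $2\dmax$ scratch channels; this is possible precisely because each node receives messages from at most $\dmax$ neighbours per layer, which reduces the primitives to sum-and-threshold operations on bounded-range signals that can be implemented by the equivariant row-wise parametrizations admitted by Theorem~\ref{theorem:equivariance}.
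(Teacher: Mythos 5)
There is a genuine gap: your construction never shows how the \emph{adjacency structure} itself can be recovered within the stated width. You store, in row $j$ of $\mU_i$, a visited flag, the degree of $v_j$, the node attribute, and the $n\,c_Y$ edge-attribute block, and you recover the edge set only ``by non-zero columns'' of that block. This fails exactly in the conceptually central case: when $c_Y=0$ (simple unattributed graphs, which the theorem covers, and for which the width bound is $2\dmax + c_X$) there is no edge-attribute block at all, and degrees plus visited flags do not determine $\mA$; even when $c_Y\geq 1$, an edge whose attribute vector happens to be zero becomes invisible. The whole difficulty of the theorem is that the topology must be encoded in $O(\dmax)$ channels rather than $O(n)$, and in your scheme the $2\dmax$ channels are mere scratch space for sum-and-threshold bookkeeping, so they carry no structural information. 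The paper resolves precisely this point with Lemma~\ref{lemma:embedding} (Maehara): any simple graph of maximum degree $\dmax$ admits an $n\times 2\dmax$ embedding whose orthogonality pattern encodes the edge set. Each layer then decodes the neighbours' partial adjacency matrices $\mA_j^{(l)}$ from their embeddings via orthogonality tests, aggregates them by an entrywise max (the $n\times n$ ``warm-up'' recursion), and re-encodes the result back into $2\dmax$ columns; the $n\,c_Y$ channels are needed only because no analogous compression is available for real-valued edge features, for which the paper uses a square-root (eigendecomposition) embedding that permutes correctly. Without some such compression-and-decoding mechanism, your claimed width cannot reconstruct the graph.

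Two secondary issues: your ``row-wise summation over neighbours' contexts'' would double-count a row already seen by several neighbours (real-valued attributes cannot be OR-ed by sum-and-threshold; the paper instead uses a max/argmax-style selection rule as in its node-attribute update), and your channel accounting (visited flag $+$ degree $+$ $c_X$ $+$ $n\,c_Y$ $+$ up to $2\dmax$ scratch) slightly exceeds the stated bound $2\dmax + c_X + n\,c_Y$. These are repairable, but the missing adjacency-encoding idea is not a detail --- it is the heart of the proof.
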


The formal statement and the proof are detailed in Appendix \ref{appendix:proof-turing}. We first show the result for the simple case where $f$ can pass messages of size $n \times n $, and then consider the case of $n \times 2 \dmax$ matrices using 
the following lemma:
\begin{lemma}[\citet{maehara1990dimension}] For any simple graph $G = (V, E)$ of $n$ nodes and maximum degree $\dmax$, there exists a unit-norm embedding of the nodes $\mX \in \R^{n \times 2 \dmax}$ such that for every $v_i,v_j \in V,~ (v_i, v_j) \in E \iff \langle\mX_i, \mX_j\rangle = 0$ .
\label{lemma:embedding}
\end{lemma}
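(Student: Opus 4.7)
The natural strategy is induction on $n = |V|$, paired with a careful genericity argument that exploits the slack offered by the $2 d_{\max}$ ambient dimensions. The base case $n = 1$ is trivial: any unit vector in $\R^{2 d_{\max}}$ satisfies the (vacuous) orthogonality condition. The case $d_{\max} = 0$ reduces to choosing any unit vectors on a $0$-sphere (which is degenerate but causes no trouble), so we may assume $d := d_{\max} \geq 1$ throughout.

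For the inductive step, remove an arbitrary vertex $v \in V$ to obtain $G - v$, whose max degree is still at most $d$. Apply the induction hypothesis to get unit vectors $\mX_1, \ldots, \mX_{n-1} \in \R^{2d}$ for $G - v$ (padding with zero coordinates if the max degree of $G - v$ is strictly smaller). We then seek a unit vector $\mX_v$ such that (a) $\langle \mX_v, \mX_u \rangle = 0$ for $u \in N_v$, and (b) $\langle \mX_v, \mX_u \rangle \neq 0$ for every non-neighbor $u \neq v$. Condition (a) restricts $\mX_v$ to the subspace $W = (\operatorname{span}\{\mX_u : u \in N_v\})^{\perp}$. Since $|N_v| \leq d$, we have $\dim W \geq 2d - d = d \geq 1$, so unit vectors in $W$ exist. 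Condition (b) excludes, within $W$, a finite union of sets of the form $\{w \in W : \langle w, \mX_u \rangle = 0\}$; each such set is a \emph{proper} subspace of $W$ precisely when $\mX_u \notin \operatorname{span}\{\mX_w : w \in N_v\}$.

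The delicate point, and the main obstacle, is ensuring that the inductive embedding is ``generic enough'' so that, for every non-neighbor $u$ of $v$, the vector $\mX_u$ does not lie in $\operatorname{span}\{\mX_w : w \in N_v\}$; otherwise condition (b) fails for every choice of $\mX_v$. I would address this by strengthening the inductive hypothesis to additionally require that every collection of at most $2d$ rows of the embedding be linearly independent. Under this strengthened hypothesis, the set $\{\mX_u\} \cup \{\mX_w : w \in N_v\}$ consists of at most $d + 1 \leq 2d$ vectors, hence linearly independent, which yields $\mX_u \notin \operatorname{span}\{\mX_w : w \in N_v\}$ as required. The genericity propagates when we add $\mX_v$: the set of forbidden choices corresponding to (b) and to the linear-independence conditions is a finite union of proper subspaces of $W$, hence a Zariski-closed proper subset of the unit sphere in $W$, whose complement is non-empty. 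Choosing $\mX_v$ in this complement completes the induction. Verifying in detail that the strengthened hypothesis is preserved --- in particular that each new ``bad'' subspace remains proper in $W$ --- is the technical core of the argument, and is where all the slack between $2d$ and $d_{\max}$ gets consumed.
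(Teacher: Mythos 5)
First, a remark on the comparison you ask for: the paper does not prove this lemma at all — it is imported from Maehara (1990) and used as a black box inside the proof of Theorem~\ref{theorem:representation_power} — so your argument has to stand on its own.

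Your greedy/inductive strategy is a reasonable route, but there is a genuine gap exactly at the step you defer as "the technical core": the invariant you chose does not self-propagate, and the assertion that every forbidden set is a proper subspace of $W$ is not supported by it. Condition (b) only needs $(\dmax+1)$-wise linear independence of the old rows. But to preserve your stronger invariant (every $\le 2\dmax$ rows independent) you must show $W \not\subseteq \mathrm{span}(S)$ for every set $S$ of up to $2\dmax-1$ old rows. Since $\dim W$ can be exactly $\dmax$ while $\dim \mathrm{span}(S)$ can be as large as $2\dmax-1$, this containment is not excluded by dimension counting, and $2\dmax$-wise independence alone does not exclude it either: nothing prevents $\dmax$ "neighbor" vectors and $2\dmax-1$ further unit vectors in $\R^{2\dmax}$, all $2\dmax$-wise independent, from having $W$ inside the span of the latter. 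So the induction as you set it up does not close, and the Zariski-genericity conclusion ("finite union of proper subspaces of $W$") is unproved precisely for the linear-independence constraints.

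The missing ingredient is the degree bound itself, which your inductive step never uses beyond $|N_v|\le\dmax$. The repair is to maintain the \emph{weaker} invariant that every $\dmax+1$ rows are linearly independent. Then for $|S|\le\dmax$, the containment $W\subseteq\mathrm{span}(S)$ forces, by the dimension count $\dim W = 2\dmax - |N_v| \ge \dmax \ge |S| \ge \dim\mathrm{span}(S)$, the equalities $|N_v|=|S|=\dmax$ and $W=\mathrm{span}(S)$. In that case $S\cap N_v=\emptyset$ (a common vertex's vector would lie in $W\cap W^{\perp}=\{0\}$, contradicting unit norm), and every $\mX_s$ with $s\in S$ is orthogonal to every $\mX_w$ with $w\in N_v$; by the inductive "adjacency iff orthogonality" each $w\in N_v$ is then adjacent to all $\dmax$ vertices of $S$ and to $v$, i.e.\ has degree $\dmax+1$ — contradicting the degree bound. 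With this invariant, both (b) and its propagation make every bad set a proper subspace of $W$, and a unit vector of $W$ outside a finite union of proper subspaces exists. Note that this is where the slack of $2\dmax$ and the graph-theoretic hypothesis are genuinely consumed; as written, your propagation step never invokes the degree bound, which is a sign that the argument cannot be complete in its current form.
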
 
The universality of SMP is a direct corollary: since each node can have the ability to reconstruct the adjacency matrix from its local context, it can also employ a universal network for sets~\citep{zaheer2017deep} to compute any equivariant function on the graph (cf. Appendix \ref{proof: universality}). Interestingly, this result shows that propagating matrices instead of vectors might be a way to solve the bottleneck problem \citep{alon2020bottleneck}: while MPNNs need feature maps that exponentially grow with the graph size in order to recover the topology, SMPs can do it with $O(\dmax n^2)$ memory.

\begin{corollary}[Expressive power]
Let $G$ be a simple graph of diameter at most $\diam$ and degree at most $\dmax$. Consider an SMP $f = f^{(L)} \circ \cdots \circ f^{(1)}$ of depth $L =\diam$ and width $2 \dmax + c_X + n ~c_Y $ satisfying the properties of Theorem \ref{theorem:representation_power}. Then, any equivariant function can be computed as $f_\textit{eq} = \sigma \circ f$, where $\sigma$ is a universal function of sets 
applied simultaneously to each node. Similarly, any permutation invariant function can be computed as $f_\textit{in} = \frac{1}{n} \sum_{v_i \in V} \sigma \circ f$.
\label{cor:universality}
\end{corollary}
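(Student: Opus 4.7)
The plan is to combine Theorem~\ref{theorem:representation_power_informal} with the universal approximation property of equivariant neural networks for sets. By that theorem, if $f$ is a depth-$\diam$ SMP parameterized as prescribed, then at every node $v_i$ the local context $\mU_i^{(L)} \in \R^{n \times c}$ retains enough information to recover the entire attributed graph $G$ together with the distinguished vertex $v_i$. Because the one-hot initialization identifies the $j$-th row of each local context with vertex $v_j$, and because SMP is permutation equivariant by Theorem~\ref{theorem:equivariance}, relabeling the graph only permutes the rows of $\mU_i^{(L)}$ while preserving their contents. Consequently, the \emph{multiset} of rows of $\mU_i^{(L)}$ is a complete invariant of the rooted isomorphism class $(G, v_i)$, and Theorem~\ref{theorem:representation_power_informal} moreover guarantees that the induced map from rooted isomorphism classes to row-multisets is injective.

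Next I would observe that any equivariant target $F$ with values in $\R^{n \times c'}$ is forced, by equivariance, to have $F(G)[i]$ depend only on the rooted isomorphism class $(G, v_i)$. Hence there is a well-defined function $\bar F$ on rooted classes with $F(G)[i] = \bar F(G, v_i)$, and by the previous paragraph $\bar F$ factors through the row-multiset of $\mU_i^{(L)}$. Invoking the universality of equivariant set networks, I can choose $\sigma$ that approximates this induced multiset function to arbitrary accuracy. Applying $\sigma$ simultaneously at every node yields $f_\textit{eq} = \sigma \circ f$ with the desired property. The invariant case then reduces to the equivariant one: a permutation-invariant $F_\textit{in}$ corresponds to a $\bar F$ that does not depend on the choice of root, so $\sigma(\mU_i^{(L)})$ is the same vector at every $i$, and averaging these per-node outputs recovers $F_\textit{in}(G)$, giving $f_\textit{in} = \frac{1}{n} \sum_{v_i \in V} \sigma \circ f$.

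The main subtlety is justifying the universal approximation step cleanly. One must guarantee that the specific functions on multisets arising from $\bar F$ belong to the class representable by the chosen set architecture. Since we restrict to simple graphs of bounded size, bounded diameter, and bounded degree, the collection of possible row-multisets lies in a compact (in fact effectively discrete) subset of $\R^{n \times c}$, so standard deep-set universal approximation results apply. A cleaner variant that avoids any $\epsilon$-approximation step is to construct $\sigma$ whose intermediate embedding is injective on the finite family of row-multisets arising from the class of graphs under consideration; a subsequent MLP can then realize $\bar F$ exactly on that finite domain, making the composition $\sigma \circ f$ injective on rooted isomorphism classes and yielding the corollary.
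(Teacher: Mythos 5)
Your overall route is the same as the paper's: use Theorem~\ref{theorem:representation_power} to reduce an equivariant target to a function of the final local context viewed as a set of rows, invoke universality of equivariant set networks, and handle the invariant case by averaging identical per-node outputs. There is, however, one step that does not hold as you state it: you claim that the multiset of rows of $\mU_i^{(L)}$ is a complete invariant of the \emph{rooted} isomorphism class $(G, v_i)$, and that the theorem gives injectivity on rooted classes. Theorem~\ref{theorem:representation_power} only guarantees that the local context determines the graph up to isomorphism (its conclusion compares contexts up to an arbitrary row permutation); it does not guarantee that the root can be recovered from the context. In the constructive proof, the context built at $v_i$ after $\diam$ layers is just some matrix whose Gram matrix equals $\bm{1}\bm{1}^\top - \mA$ (plus attribute channels), and nothing in that construction depends on $i$ once the receptive field covers the whole graph: two vertices $v_i, v_j$ not related by an automorphism can end up with identical local contexts, hence identical row multisets. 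A $\sigma$ that only sees the multiset then outputs the same vector at both nodes and cannot realize equivariant targets whose values differ across such nodes (e.g., node eccentricity), so your claimed factorization of $\bar F$ through the row multiset breaks down precisely for the equivariant case.

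The paper closes exactly this hole by giving the set function the indicator of the node at which it is applied as an additional input: at node $v_i$ it computes $h''_{\text{inv}}(\mU, \bm{\mathds{1}}_i) = h_{\text{eq}}(\mA)[i,:]$, so $\sigma$ acts on a set of row vectors in which the root's row is marked; equivalently, one can reserve a channel of the SMP so that each node keeps its own row flagged in its local context. Once that marking is added, the map from marked row-sets to rooted classes is injective, and the rest of your argument goes through and coincides with the paper's, including the reduction of the invariant case to the constant equivariant function. Your remark that the relevant domain of row multisets is finite (bounded size, degree, diameter), so that the universal set network can be taken to interpolate $\bar F$ exactly rather than approximately, is a useful clarification that the paper leaves implicit.
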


These results show that two components are required to build a universal approximator of functions on graphs. First, one needs an algorithm that breaks symmetry during message passing, which SMP manages to do in an equivariant manner. Second, one needs powerful layers to parametrize the message, aggregation and update functions. Here, we note that the proofs of Theorem~\ref{theorem:representation_power_informal} and Corollary~\ref{cor:universality} are not constructive and that deriving practical parametrizations that are universal remains an open question \citep{keriven2019universal}. Nevertheless, we do constructively prove the following more straightforward claim using a simple parametrization:

\begin{proposition} \label{prop:strictly}
SMP is strictly more powerful than MPNN: SMP can simulate any MPNN with the same number of layers, but MPNNs cannot simulate all SMPs. 
\end{proposition}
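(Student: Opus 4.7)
The plan is to prove the two directions separately, tackling the simulation constructively and then exhibiting a concrete pair of graphs witnessing strictness.

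\emph{Part 1: SMP simulates any MPNN of equal depth.} The idea is to maintain the SMP local context in a special rank-one form that packages the MPNN state. I would prove by induction on $l$ that one can parametrize an SMP so that
\[
\mU_i^{(l)} = \bm{\mathds{1}}_i \, [1,\ \vx_i^{(l)}]^T \ \in \ \R^{n \times (1+c)},
\]
where the first column stays equal to the untouched one-hot identifier $\bm{\mathds{1}}_i$ and $\vx_i^{(l)}$ is the vector that the target MPNN (with message $m_{\text{MPNN}}$, update $u_{\text{MPNN}}$, aggregation $\phi_{\text{MPNN}}$) would carry at node $v_i$ after $l$ layers. At $l=0$ this matches the SMP initialization. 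Given this form, the self-features are recovered by the invariant row-sum $[1,\ \vx_i^{(l)}] = \sum_k \mU_i^{(l)}[k,:]$, and any vector $\vv$ can be written back into the self-row via the outer product $\mU_i^{(l)}[:,0]\,\vv^T$, which is equivariant because the identifier column transforms as $\mU_i^{(l)}[:,0] \mapsto \mPi^T \mU_i^{(l)}[:,0]$ under a permutation $\pi$. I would then set $m(\mU_i,\mU_j,\vy_{ij})$ to extract the self-features of both arguments, form the MPNN message $\vm_{ij} = m_{\text{MPNN}}(\vx_i^{(l)},\vx_j^{(l)},\vy_{ij})$, and re-inject the result as $\mU_i^{(l)}[:,0]\,\vm_{ij}^T$. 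Sum-aggregation gives $\mU_i^{(l)}[:,0]\,\bigl(\sum_{j\in N_i}\vm_{ij}\bigr)^T$, still rank-one on the identifier column; the update extracts both, applies $u_{\text{MPNN}}$, and re-injects to restore the invariant. A direct check shows that $m$, $\phi$ and $u$ so defined satisfy the hypotheses of Theorem~\ref{theorem:equivariance}, since every ``MPNN part'' acts on invariant row sums while every re-injection is an outer product against the equivariantly-transforming identifier column.

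\emph{Part 2: some SMP is not simulable by any MPNN.} I would exhibit two graphs that no MPNN can distinguish but some SMP can: the $6$-cycle $C_6$ and the disjoint union of two triangles $2C_3$. Both are $2$-regular on six unlabeled nodes, so the $1$-WL procedure assigns the same colour to every vertex in either graph; by the WL upper bound on MPNN expressiveness \citep{xu2018powerful,morris2019weisfeiler}, every MPNN produces identical node and graph representations on the two inputs. On the other hand, the minimalist SMP already recalled after the layer definition in Section~\ref{section:SMP}, with update $\mU_i^{(l+1)} = \sum_{v_j\in N_i} \mU_j^{(l)}$ and the one-hot initialization, satisfies $\mU_i^{(l)}[k,:] = \mA^l[i,k]$ by induction, and in particular $\mU_i^{(3)}[i,:] = \mA^3[i,i]$ counts closed length-$3$ walks through $v_i$. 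This equals $2$ for every vertex of $2C_3$ (one triangle, two orientations) and $0$ for every vertex of $C_6$ (triangle-free), so any reasonable permutation invariant readout (e.g.\ summing the self-diagonal across nodes) separates the two graphs. Hence no MPNN can reproduce this SMP.

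\emph{Where the work lies.} The non-trivial step is Part~1: arranging the SMP primitives so that (i) the rank-one ``identifier $\otimes$ MPNN feature'' form is preserved through every layer and (ii) the induced $m$, $\phi$, $u$ meet the equivariance conditions of Theorem~\ref{theorem:equivariance}. Once the row-sum extraction and outer-product re-injection are identified as the right primitives, both the induction and the equivariance checks reduce to bookkeeping; Part~2 then follows immediately from the WL limitation of MPNNs combined with the elementary $\mA^3$ calculation on $C_6$ versus $2C_3$.
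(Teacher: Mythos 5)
Your proof is correct and takes essentially the same route as the paper: the simulation keeps the MPNN state on the self-row of each local context and applies the MPNN's message/update functions through equivariant extract-and-reinject operations (the paper replicates the self-row with $\bm{1}\bm{1}^T$ and truncates with a diagonal extraction, which is the same bookkeeping as your row-sum extraction plus outer product against the identifier column), and strictness is established exactly as in the paper via a pair of WL-indistinguishable regular graphs separated by a simple SMP that propagates identifiers and reads off diagonal entries of $\mA^3$. The only cosmetic difference is that you spell out sum aggregation; for a generic MPNN aggregator you would extract the self-row via the identifier (or replicate it to all rows as the paper does) so the aggregator is applied row-wise, after which your argument goes through unchanged.
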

To prove it, we create for any MPNN a corresponding SMP which performs the same operations as the MPNN on the main diagonal of the local context. On the contrary, we can easily create an SMP which is able to distinguish between two small graphs that cannot be distinguished by the Weisfeiler-Lehman test (Appendix \ref{proof-prop1}).

\section{Implementation} 
\label{section:parametrization}

SMP offers a lot of flexibility in its implementation, as any equivariant function that combines the local context of two nodes and the edge features can be used. We propose two implementations that we found to work well, but our framework can also be implemented differently. In both cases, we split the computation of the messages in two steps. First, the local context of each node is updated using a neural network for sets. Then, a standard message passing network is applied separately on each row of the local contexts.
For the first step, we use a subset of the linear equivariant functions computed by \citet{maron2018invariant}:
\[
\forall v_i \in V, \quad
\hat{\mU}_i^{(l)} = \mU_i^{(l)} ~ \bm{W}_1^{(l)} + \frac{1}{n} \bm{1}_n ~  \bm{1}_n^T  ~ \mU_i^{(l)} ~ \bm{W}_2^{(l)} + \bm{1}_n (\vc^{(l)})^\top + \frac{1}{n} \bm{\mathds{1}}_i \bm{1}^T \mU_i^{(l)}~ \bm{W}_3^{(l)}, 
\]
where $\bm{1}_n \in \R^{n \times 1}$ is a vector of ones, $\bm{\mathds 1}_i \in \R^{n \times 1}$ the indicator of $v_i$, whereas $(\mW_k^{(l)})_{1 \leq k \leq 5}$ and $\vc^{(l)}$ are learnable parameters. As for the message passing architecture, we propose two implementations with different computational complexities:
\vspace{-0.2cm}
\paragraph{Default SMP} This architecture corresponds to a standard MPNN, where the message and update functions are two-layer perceptrons. We use a sum aggregator normalized by the average degree $\davg$ over the graph: it retains the good properties of the sum aggregator~\citep{xu2018powerful}, while also avoiding the exploding-norm problem~\citep{velickovic2020neural}. This network can be written:
\begin{equation} 
m_\text{def}^{(l)}(\hat \mU_i^{(l)}, \hat \mU_j^{(l)}, \vy_{ij}) = \textit{MLP}(\hat{\mU}_i^{(l)}, \hat{\mU}_j^{(l)}, \vy_{ij})
\label{eq:default}
\end{equation}
\[ \textstyle
    \mU_i^{(l+1)} = \textit{MLP}(\hat\mU_i^{(l)}, \sum_{v_j \in N_i}  m_\text{def}^{(l)}(\mU_i^{(l)}, \mU_j^{(l)}, \vy_{ij}) / \davg),
\]

\vspace{-0.2cm}
\paragraph{Fast SMP} For graphs without edge features, we propose a second implementation with a message function that uses a pointwise multiplication $\odot$:
$$
m_\text{fast}^{(l)}(\hat \mU_i^{(l)}, \hat \mU_j^{(l)}) = \hat{\mU}_j^{(l)} + \left( \hat{\mU}_i^{(l)} \mW_4^{(l)} \right) \odot \left( \hat{\mU}_j^{(l)} \mW_5^{(l)} \right),
$$
where $\mW_4$ and $\mW_5$ are learnable matrices. The aggregation is the same, and the update is simply a residual connection, so that the $l$-th SMP layer updates each node's local context as
\begin{align*}
\mU_i^{(l+1)} 
&= \textstyle \hat\mU_i^{(l)} + \frac{1}{\davg} \sum_{v_j \in N_i} m_\text{fast}^{(l)}(\mU_i^{(l)}, \mU_j^{(l)}) \\
&= \textstyle \hat\mU_i^{(l)} +  \left(\sum_{v_j \in N_i} \hat{\mU}^{(l)}_j ~+~  \hat\mU^{(l)}_i \mW_4^{(l)} \odot \sum_{v_j \in N_i} \hat\mU^{(l)}_j \mW_5^{(l)} \right) / \davg
\label{eq:fast}
\end{align*}

In this last equation, the arguments of the two sums are only functions of the local context of node $v_j$.
This allows for a more efficient implementation, where one message is computed per node, instead of one per edge as in default SMP. 

One might notice that Fast SMP can be seen as a local version of PPGN (proof in Appendix \ref{proof:prop2}):

\begin{proposition} 
\label{prop:2} A Fast SMP with $k$ layers can be approximated by a $2k$-block PPGN.
\end{proposition}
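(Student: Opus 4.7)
}

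The plan is to identify the SMP local-context tensor with the PPGN state tensor and show that each Fast SMP layer can be decomposed into two sub-steps, each of which is realizable (up to universal-approximation error in the elementwise MLPs) by a single PPGN block. Concretely, recall that a PPGN block maps a tensor $\mT \in \R^{n \times n \times c}$ to $[\text{MLP}_3(\mT),\ \text{MLP}_1(\mT)\cdot\text{MLP}_2(\mT)]$, where the $\text{MLP}_k$ act elementwise on the channel dimension and the product is channel-wise matrix multiplication on the two $n$-axes. I identify the SMP tensor through $\tU^{(l)}[i,j,d]=\mU_i^{(l)}[j,d]$ and devote additional channels of $\mT$ to the adjacency matrix $\mA$, the identity $\mI$, and the all-ones matrix $\bm{1}_n\bm{1}_n^\top$; these auxiliaries can be produced once from the PPGN input (since $\mathbf{1}\mathbf{1}^\top$ and $\mI$ are constants that any MLP with a bias can emit) and then carried through every subsequent block by $\text{MLP}_3$.

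For the first block of each SMP layer, I would simulate the linear equivariant preprocessing $\mU_i \mapsto \hat\mU_i$. Terms 1 and 3 of the definition of $\hat\mU_i$ are elementwise in the channel dimension, so they are absorbed into $\text{MLP}_3$. Term 2, $\frac{1}{n}\bm{1}_n\bm{1}_n^\top\mU_i\mW_2$, is exactly the channel-wise matrix multiplication $\tfrac{1}{n}(\tU \cdot \bm{1}_n\bm{1}_n^\top)$ followed by a channelwise linear map $\mW_2$: set $\text{MLP}_1$ to copy $\tU$ into dedicated product-channels (applying $\mW_2$ elementwise), and $\text{MLP}_2$ to place $\bm{1}_n\bm{1}_n^\top$ into the corresponding channels. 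Term 4 is Term 2 restricted to the diagonal, i.e.\ the elementwise product of Term 2 with $\mI$; since $\mI$ remains as an auxiliary channel and an MLP can approximate the bilinear map $(a,b) \mapsto ab$ on a compact domain, this restriction is realized (approximately) by $\text{MLP}_3$ of the same or the next block. Hence one PPGN block suffices to compute (an approximation of) $\tU^{(l)} \to \hat\tU^{(l)}$ while preserving the auxiliary channels.

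For the second block, I would simulate the aggregation-and-product step. The two needed sums are
\[
  \textstyle\sum_{v_j\in N_i}\hat\mU_j \;=\; (\mA\hat\tU)[i,:,:],
  \qquad \sum_{v_j\in N_i}\hat\mU_j\mW_5 \;=\; (\mA(\hat\tU\mW_5))[i,:,:],
\]
both of which are channel-wise matrix multiplications with $\mA$. I allocate disjoint subchannels of $\text{MLP}_2$ to emit $\hat\tU$ and $\hat\tU\mW_5$ (the linear map $\mW_5$ being absorbed into $\text{MLP}_2$), and have $\text{MLP}_1$ broadcast $\mA$ into the matching subchannels; the single matrix multiplication in the block then produces both aggregations in parallel. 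In addition, $\text{MLP}_3$ carries through $\hat\tU$ and $\hat\tU\mW_4$ together with the divisor $1/\davg$. The remaining pointwise product $\hat\tU\mW_4 \odot (\mA\hat\tU\mW_5)$ and the summation into $\hat\tU + \frac{1}{\davg}\!\left(\mA\hat\tU + \hat\tU\mW_4 \odot \mA\hat\tU\mW_5\right)$ are bilinear/linear in quantities that now share a channel position, so they are approximated by an elementwise MLP (either $\text{MLP}_3$ of this block or the first MLP of the next block). Stacking two such blocks per SMP layer for $k$ layers yields the desired $2k$-block PPGN.

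The main obstacle is that each PPGN block contains only one matrix multiplication and that this multiplication is \emph{channel-wise}, so distinct $n \times n$ operations must be routed to distinct channels by the preceding MLPs; the bookkeeping to guarantee that $\mA$, $\bm{1}_n\bm{1}_n^\top$, and $\mI$ always meet the correct copies of $\hat\tU$ in the correct channels is the real content of the argument. The second, milder obstacle is that the pointwise product and the diagonal-restriction-by-$\mI$ are not themselves PPGN primitives, which is exactly why the statement is about \emph{approximation}: MLPs are universal on compact sets, so both operations can be approximated to arbitrary accuracy, and continuity of the whole layer lifts this to approximation of the full $k$-layer Fast SMP.
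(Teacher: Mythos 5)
Your plan is correct and follows essentially the same route as the paper's proof: per Fast SMP layer, one PPGN block realizes the linear equivariant preprocessing $\mU_i \mapsto \hat\mU_i$ (using a channel-wise product with $\bm{1}_n\bm{1}_n^\top$ and a diagonal restriction via an auxiliary identity channel), and a second block realizes the neighborhood aggregation via a channel-wise product with $\mA$, with the pointwise product $\odot$ and the constant channels $\mA$, $\mI$, $\bm{1}_n\bm{1}_n^\top$ handled exactly as in the paper (MLP approximation on compact sets and carried-through channels). The bookkeeping you flag as the main obstacle is precisely what the paper's explicit choices of $m_1,\dots,m_4$ spell out, so no gap remains.
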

Despite not being more powerful, Fast SMP has the advantage of being more efficient than PPGN, as it can exploit the sparsity of adjacency matrices. Furthermore, as we will see experimentally, our method manages to learn topological information much more easily than PPGN, a property that we attribute to the inductive bias carried by message-passing.

\begin{table}[t]
\centering
	\caption{Time and space complexity of the forward pass expressed in terms of number of nodes $n$, number of edges $m$,  number of node colors $\chi$, and width $c$. For connected graphs, we trivially have $\chi \leq n \leq m + 1  \leq n^2$.\vspace{1.5mm}}
\resizebox{0.73\textwidth}{!}{
	\begin{tabular}{@{}l l l} \toprule
		Method & Memory per layer & Time complexity per layer \\
		\midrule
		GIN \citep{xu2018powerful}& $\Theta(n ~ c)$ & $\Theta(m ~ c + n ~c^2)$ \\
		MPNN \citep{gilmer2017neural}& $\Theta(n ~ c)$ & $\Theta(m ~c^2)$ \\ \midrule
		Fast SMP (with coloring) & $\Theta(n ~\chi ~ c)$ & $\Theta(m ~\chi ~ c + n ~\chi ~ c^2) $ \\
		Fast SMP & $\Theta(n^2~ c)$ & $\Theta(m ~ n ~ c + n^2 ~ c^2) $ \\
		SMP & $\Theta(n^2~c)$ & $\Theta(m ~n ~ c^2) $ \\
		PPGN \citep{maron2019provably}& $\Theta(n^2~ c)$ & $\Theta(n^3~c + n^2~c^2) $ \\
		Local order-3 WL \citep{morris2019weisfeiler}& $\Theta(n^3~ c) $ & $\Theta(n^4~c + n^3~c^2)$ \\
		\bottomrule \\
	\end{tabular}
	}
	\vspace{-0.4cm}
 \label{complexity}
 
\end{table}

\paragraph{Complexity}  Table \ref{complexity} compares the per-layer space and time complexity induced by the forward pass of SMP with that of other standard graph networks. Whereas local order-$3$ Weisfeiler-Lehman networks need to store all triplets of nodes, both PPGN and SMP only store information for pairs of nodes. However, message-passing architectures (such as SMP) can leverage the sparsity of the adjacency matrix and hence benefit from a more favorable time complexity than architectures which perform global updates (as PPGN).
 
An apparent drawback of SMP (shared by all equivariant powerful architectures we are aware of) is the need for more memory than MPNN. This difference is partially misleading since it is known that the width of any MPNN needs to grow at least linearly with $n$ (for any constant depth) for it to be able to solve many graph-theoretic problems~\citep{Loukas2020What,corso2020principal,loukas2020hard}. However, for graphs with a large diameter, the memory requirements of SMP can be relaxed by using the following observation: \emph{if each node is colored differently from all nodes in its $2k$-hop neighborhood, then no node will see the same color twice in its $k$-hop neighborhood}. It implies that nodes which are far apart can use the same identifier without conflict. We propose in Appendix \ref{appendix:coloring} a procedure (Fast SMP with coloring) based on greedy coloring which can replace the initial one-hot encoding, so that each node can manipulate smaller matrices $\mU_i$. This method allows to theoretically improve both the time and space complexity of SMP, although the number of colors needed usually grows fast with the number of layers in the network.

\section{Experiments} \label{section:Experiments}


\subsection{Cycle detection}

\begin{table}[t!]
    \caption{Experiments on cycle detection, viewed as a graph classification problem.}
    \vspace{-0.2cm}
\begin{subtable}[]{\textwidth}
\caption{Test accuracy on the detection of cycles of various length with 10,000 training samples. (Best seen in color.) Only SMP solves the problem in all configurations.}
\vspace{-0.2cm}
\label{tab:results-cycles}
\resizebox{\textwidth}{!}{%
\begin{tabular}{@{}lrrrrrrrrrrrr@{}}
\toprule
\multicolumn{1}{l|}{Cycle length} & \multicolumn{4}{c|}{4}                                                                                                                                                                                             & \multicolumn{4}{c|}{6}                                                                                                                                                                                            & \multicolumn{4}{c}{8}                                                                                                                                                                                              \\ \midrule
\multicolumn{1}{l|}{Graph size}   & 12                                                  & 20                                                 & 28                                                 & \multicolumn{1}{r|}{36}                            & 20                                                 & 31                                                 & 42                                                 & \multicolumn{1}{r|}{56}                            & 28                                                 & 50                                                 & 66                                                 & 72                                                  \\ \midrule
MPNN                              & \cellcolor[HTML]{9AFF99}{\color[HTML]{000000} 98.5} & \cellcolor[HTML]{FFD88A}93.2                       & \cellcolor[HTML]{FFD88A}91.8                       & \cellcolor[HTML]{EB6767}86.7                       & \cellcolor[HTML]{9AFF99}98.7                       & \cellcolor[HTML]{FFD88A}95.5                       & \cellcolor[HTML]{FFD88A}92.9                       & \cellcolor[HTML]{EB6767}88.0                       & \cellcolor[HTML]{9AFF99}98.0                       & \cellcolor[HTML]{FFD88A}96.3                       & \cellcolor[HTML]{FFD88A}92.5                       & \cellcolor[HTML]{EB6767}89.1                        \\
GIN                               & \cellcolor[HTML]{9AFF99}98.3                        & \cellcolor[HTML]{9AFF99}97.1                       & \cellcolor[HTML]{FFD88A}95.0                       & \cellcolor[HTML]{FFD88A}93.0                       & \cellcolor[HTML]{69E242}99.5                       & \cellcolor[HTML]{9AFF99}97.2                       & \cellcolor[HTML]{FFD88A}95.1                       & \cellcolor[HTML]{FFD88A}92.7                       & \cellcolor[HTML]{9AFF99}98.5                       & \cellcolor[HTML]{9AFF99}98.8                       & \cellcolor[HTML]{FFD88A}90.8                       & \cellcolor[HTML]{FFD88A}92.5                        \\
GIN + degree                      & \cellcolor[HTML]{9AFF99}99.3                        & \cellcolor[HTML]{9AFF99}98.2                       & \cellcolor[HTML]{9AFF99}97.3                       & \cellcolor[HTML]{FFD88A}96.7                       & \cellcolor[HTML]{9AFF99}99.2                       & \cellcolor[HTML]{9AFF99}97.1                       & \cellcolor[HTML]{FFD88A}97.1                       & \cellcolor[HTML]{FFD88A}94.5                       & \cellcolor[HTML]{9AFF99}99.3                       & \cellcolor[HTML]{9AFF99}98.7                       & \cellcolor[HTML]{FFD88A}                           & \cellcolor[HTML]{FFD88A}95.4                        \\
GIN + rand id                     & \cellcolor[HTML]{9AFF99}99.0                        & \cellcolor[HTML]{FFD88A}96.2                       & \cellcolor[HTML]{FFD88A}94.9                       & \cellcolor[HTML]{EB6767}88.3                       & \cellcolor[HTML]{9AFF99}99.0                       & \cellcolor[HTML]{9AFF99}97.8                       & \cellcolor[HTML]{FFD88A}95.1                       & \cellcolor[HTML]{FFD88A}96.1                       & \cellcolor[HTML]{9AFF99}98.6                       & \cellcolor[HTML]{9AFF99}98.0                       & \cellcolor[HTML]{9AFF99}97.2                       & \cellcolor[HTML]{FFD88A}95.3                        \\
RP \cite{murphy2019relational}                               & \cellcolor[HTML]{69E242}100                         & \cellcolor[HTML]{69E242}99.9                       & \cellcolor[HTML]{69E242}99.7                       & \cellcolor[HTML]{9AFF99}97.7                       & \cellcolor[HTML]{9AFF99}99.0                       & \cellcolor[HTML]{9AFF99}97.4                       & \cellcolor[HTML]{FFD88A}92.1                       & \cellcolor[HTML]{EB6767}84.1                       & \cellcolor[HTML]{9AFF99}99.2                       & \cellcolor[HTML]{9AFF99}97.1                       & \cellcolor[HTML]{FFD88A}92.8                       & \cellcolor[HTML]{EB6767}{\color[HTML]{000000} 80.6} \\
PPGN                              & \cellcolor[HTML]{69E242}100                         & \cellcolor[HTML]{69E242}100                        & \cellcolor[HTML]{69E242}100                        & \cellcolor[HTML]{69E242}99.8                       & \cellcolor[HTML]{9AFF99}98.3                       & \cellcolor[HTML]{9AFF99}99.4                       & \cellcolor[HTML]{FFD88A}93.8                       & \cellcolor[HTML]{EB6767}87.1                       & \cellcolor[HTML]{69E242}99.9                       & \cellcolor[HTML]{9AFF99}98.7                       & \cellcolor[HTML]{EB6767}84.4                       & \cellcolor[HTML]{EB6767}76.5                        \\
Ring-GNN                          & \cellcolor[HTML]{69E242}100                         & \cellcolor[HTML]{69E242}99.9                       & \cellcolor[HTML]{69E242}99.9                       & \cellcolor[HTML]{69E242}99.9                       & \cellcolor[HTML]{69E242}100                        & \cellcolor[HTML]{69E242}100                        & \cellcolor[HTML]{69E242}100                        & \cellcolor[HTML]{69E242}100                        & \cellcolor[HTML]{69E242}99.1                       & \cellcolor[HTML]{69E242}99.8                       & \cellcolor[HTML]{EB6767}74.4                       & \cellcolor[HTML]{EB6767}71.4                        \\
SMP                               & \cellcolor[HTML]{69E242}{\color[HTML]{000000} 100}  & \cellcolor[HTML]{69E242}{\color[HTML]{000000} 100} & \cellcolor[HTML]{69E242}{\color[HTML]{000000} 100} & \cellcolor[HTML]{69E242}{\color[HTML]{000000} 100} & \cellcolor[HTML]{69E242}{\color[HTML]{000000} 100} & \cellcolor[HTML]{69E242}{\color[HTML]{000000} 100} & \cellcolor[HTML]{69E242}{\color[HTML]{000000} 100} & \cellcolor[HTML]{69E242}{\color[HTML]{000000} 100} & \cellcolor[HTML]{69E242}{\color[HTML]{000000} 100} & \cellcolor[HTML]{69E242}{\color[HTML]{000000} 100} & \cellcolor[HTML]{69E242}{\color[HTML]{000000} 100} & \cellcolor[HTML]{69E242}{\color[HTML]{000000} 99.9} \\ \bottomrule
\end{tabular}
}
\end{subtable}
    \begin{subtable}{\textwidth}
    \centering
    \caption{Test accuracy (\%) when evaluating the generalization ability of inductive networks. Each network is trained on one graph size (``In-distribution''), validated on a second size, then tested on a third (``Out-of-distribution''). SMP is the only powerful network evaluated that generalizes well. \textit{OOM} = out of memory.}
    \vspace{-0.1cm}
\begin{tabular}{@{}lllllll@{}}
\toprule
Setting      & \multicolumn{3}{c}{In-distribution} & \multicolumn{3}{c}{Out-of-distribution}       \\ \midrule
Cycle length & 4          & 6          & 8         & 4             & 6             & 8             \\
Graph size   & 20         & 31         & 50        & 36            & 56            & 72            \\ \midrule
GIN          & 93.9       & 99.7       & 98.8      & 81.1          & 85.8          & \textbf{88.8} \\
PPGN         &   99.9         &   99.5         &     98.7      &   50.0            &      50.0         &      50.0         \\
Ring-GNN     & 100        & 100        & 99.9      & 50.0          & 50.0          & \textit{OOM}           \\
SMP          & 100        & 99.8       & 99.5      & \textbf{99.8} & \textbf{87.8} & 79.5          \\ \bottomrule
\end{tabular}
    \label{tab:generalization}
    \end{subtable}
    
    \begin{subtable}{\textwidth}
    \caption{Test accuracy (\%) on the detection of 6 cycles for graphs with 56 nodes trained on less data. Thanks to its equivariance properties, SMP requires much less data for training.}
    \vspace{-0.1cm}
    \centering
    \begin{tabular}{@{} l c c c c @{}} \toprule
    Train samples & $200$&$500$&$1000$&$5000$ \\ 
    \midrule
    GIN + random identifiers&$65.8$ &$70.8$&$80.6$&$96.4$ \\
    SMP & $\textbf{87.7}$& $\textbf{97.4}$ & $\textbf{97.6}$ & $\textbf{99.5}$ \\ \bottomrule 
    \end{tabular}
    \label{tab:less-samples}
    \vspace{-.1cm}
    \end{subtable}
    \label{tab:cycle_results}
\end{table}

\begin{figure}
    \centering
		\includegraphics[width=\textwidth]{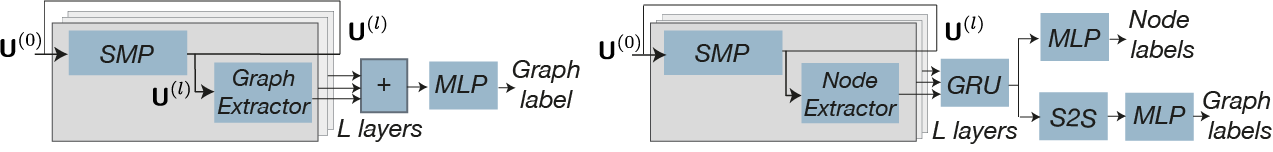}
    \caption{(left) Architecture for cycle detection. The graph extractor computes the trace and the sum along the two first axes of $\tU$, and passes the result into a two-layer MLP in order to produce a set of global features. (right) Architecture for multi-task learning: 
    after each convolution, node features are extracted using a two-layer MLP followed by three pooling methods (mean, max, and the extraction of $\tU[i, i, :]$ for each $v_i \in V$), and a final linear layer. The rest of the architecture is similar to~\citet{corso2020principal}: it uses a Gated Recurrent Unit (GRU) and a Set-to-set network (S2S). \vspace{-2mm}}
    \label{fig:architecture}
\end{figure}

We first evaluate different architectures on the detection of cycles of length 4, 6 and 8 (for several graph sizes), implemented as a graph classification problem\footnote{Our implentation with Pytorch Geometric~\citep{fey2019fast} is available at \url{github.com/cvignac/SMP}.}. Models are retrained for each cycle length and graph size on $10$k samples with balanced classes, and evaluated on $10,000$ samples as well. 
The same architecture (detailed in Figure \ref{fig:architecture}) is used for all models, as we found it to perform better than the original implementation of each method: the methods under comparison thus only differ in the definition of the convolution, making comparison easy. 
We use the fast implementation of SMP, as we find its expressivity to be sufficient for this task.


Results are shown in Tab \ref{tab:cycle_results}. For a given cycle length, the task becomes harder as the number of nodes in the graph grows: the bigger the graph, the more candidate paths that the network needs to verify as being cycles. SMP is able to solve the task almost perfectly for all graph and cycle sizes. For standard message-passing models, we observe a correlation between accuracy and the presence of identifiers: random identifiers and weak identifiers (a one-hot encoding of the degree) tend to perform better than the baseline GIN and MPNN.
PPGN and RING-GNN solve the task well for small graphs, but fail when $n$ grows.
Perhaps due to a miss-aligned inductive bias, we encountered difficulties with training them, whereas message-passing architectures could be trained more easily. We provide a more detailed comparison between SMP, PPGN and Ring-GNN in Appendix~\ref{appendix:provably-smp}. We also compare the generalization ability of the different networks that can be used in inductive settings. GIN generalizes well, but SMP is the only one that achieves good performance among the powerful networks. This may be imputable to the inductive bias of message passing architectures, shared by GIN and SMP. 
Finally, we compare SMP and GIN with random identifiers in settings with less training data: SMP requires much fewer samples to achieve good performance, which confirms that equivariance is important for good sample efficiency.



\subsection{Multi-task detection of graph properties}

We further benchmark SMP on the multi-task detection of graph properties proposed in~\citet{corso2020principal}. The goal is to estimate three node-defined targets: \textit{geodesic distance} from a given node (Dist.), \textit{node eccentricity} (Ecc.), and computation of \textit{Laplacian features} $\mL \vx$ given a vector $\vx$ (Lap.), as well as three graph-defined targets: \textit{connectivity} (Conn.), \textit{graph diameter} (Diam.), and \textit{spectral radius} (Rad.). The training set is composed of 5120 graphs with up to 24 nodes, while graphs in the test set have up to 19 nodes. Several MPNNs are evaluated as well as PNA \citep{corso2020principal}, a message-passing model based on the combination of several aggregators. Importantly, random identifiers are used for all these models, so that all baseline methods are theoretically poweful \citep{Loukas2020What}, but not equivariant.

\begin{table}[t]
\vspace{-0.3cm}
\caption{Log MSE on the test set (lower is better). Baseline results are from \citet{corso2020principal}.}
    \centering
\resizebox{0.84\textwidth}{!}{
    \begin{tabular}{@{}l c c c c c c c@{}}
    \toprule
        Model & \textit{Average} & Dist.& Ecc. & Lap. & Conn. & Diam. & Rad. \\
        \midrule 
GIN      	&$-1.99$&$-2.00$	&$-1.90$	&$-1.60$	&$-1.61$	&$-2.17$	&$-2.66$\\
GAT	        &$-2.26$&$-2.34$	&$-2.09$	&$-1.60$	&$-2.44$	&$-2.40$	&$-2.70$\\
MPNN (sum)	&$-2.53$	&$-2.36$	&$-2.16$	&$-2.59$	&$-2.54$	&$-2.67$	&$-2.87$\\
PNA	        &$-3.13$	&$-2.89$	&$-2.89$	&$-3.77$	&$-2.61$	&$-3.04$	&$-3.57$ \\ \midrule 
Fast MPNN (Ablation) & $-2.37$ & $-2.47$ & $-1.99$ & $-2.83$ & $-1.61$ & $-2.40$ & $-2.93$ \\
MPNN (Ablation)	& $-2.77$		&$-3.18$&$-2.05$&$-3.27$&$-2.24$&$-2.88$&$-2.97$\vspace{0.1cm}\\
\bf{Fast SMP} &$-3.53$&$-3.31$&$-3.36$&$\bm{-4.30}$&$-2.72$&$\bm{-3.65}$&$\bm{-3.82}$ \\ 
\bf{SMP} &$\bm{-3.59}$ &$\bm{- ~3.59}$&$\bm{-3.67}$&$-4.27$&$\bm{-2.97}$&$-3.58$&$-3.46$\\
\bottomrule\\
\end{tabular} 
}
\label{tab:multi-task}
\vspace{-0.3cm}
\end{table}

All models are benchmarked using the same architecture, apart from the fact that SMP manipulates local contexts. In order to pool these contexts into node features and use them as input to the Gated Recurrent Unit \citep{cho2014learning}, we use an extractor described in Figure \ref{fig:architecture}. As an ablation study, we also consider for each model a corresponding MPNN with the same architecture.

The results are summarized in Table~\ref{tab:multi-task}. 
We find that both SMPs are able to exploit the local contexts, as they perform much better than the corresponding MPNN.
SMP also outperforms other methods by a significant margin. 
Lastly, standard SMP tends to achieve better results than fast SMP on tasks that require graph traversals (shortest path computations, excentricity, checking connectivity), which may be due to a better representation power.

\subsection{Constrained solubility regression on ZINC}

\begin{table}[t]
    \centering
    \caption{Mean absolute error (MAE) on ZINC, trained on a subset of 10k molecules.}
    \begin{tabular}{lrr}
    \toprule
        Model & No edge features & With edge features \\
        \hline
        Gated-GCN \cite{fey2020hierarchical} & $0.435$& $0.282$\\
        GIN \cite{fey2020hierarchical}& $0.408$ & $0.252$\\
        PNA \cite{corso2020principal}& $0.320$ & $0.188$ \\
        DGN \cite{beaini2020directional}& $\bm{0.219}$ & $0.168$ \\
        MPNN-JT \cite{fey2020hierarchical}& -- & $0.151$ \\
        \hline
        MPNN (ablation) & $0.272$ & $0.189$ \\
        \textbf{SMP (Ours)} & $\bm{0.219}$ & $\bm{0.138}$ \\
        \bottomrule
    \end{tabular}
\vspace{-0.2cm}
    \label{tab:zinc}
\end{table}

The ZINC database is a large scale dataset containing molecules with up to 37 atoms. The task is to predict the constrained solubility of each molecule, which can be seen as a graph regression problem. We follow the setting of \cite{dwivedi2020benchmarking}: we train SMP on the same subset of 10,000 molecules with a parameter budget of around 100k, reduce the learning rate when validation loss stagnates, and stop training when it reaches a predefined value. We use an expressive parametrization of SMP, with 12 layers and 2-layer MLPs both in the message and the update functions. In order to reduce the number of parameters, we share the same feature extractor after each layer (cf Fig. \ref{fig:architecture}). Results are presented in Table \ref{tab:zinc}. They show that in both cases (with or without edge features, which are a one-hot encoding of the bond type), SMP is able to achieve state of the art performance. Note however than even better results (0.108 MAE using a MPNN with edge features \citep{bouritsas2020improving}) can be achieved by augmenting the input with expert features. We did not use them in order to compare fairly with the baseline results.

\section{Conclusion}

We introduced structural message-passing (SMP), a new architecture that is both powerful and permutation equivariant, solving a major weakness of previous message-passing networks. Empirically, SMP significantly outperforms previous models in learning graph topological properties, but retains the inductive bias of MPNNs and their good ability to process node features. 
We believe that our work paves the way to graph neural networks that efficiently manipulate both node and topological features, with potential applications to chemistry, computational biology and neural algorithmic reasoning.
\newpage
\section*{Broader Impact}

This paper introduced a new methodology for building graph neural networks, conceived independently of a specific application. As graphs constitute a very abstract way to represent data, they have found a lot of different applications \citep{zhou2018graph}. The wide applicability of graph neural networks makes it challenging to foresee how our method will be used and the ethical problems which might occur.

Nevertheless, as we propose to overcome limitations of previous work in learning topological information, our method is likely to be used first and foremost in fields were graph topology is believed to be important. We hope in particular that it can contribute to the fields of quantum chemistry and drug discovery. The good performance obtained on the ZINC dataset is an encouraging sign of the potential of SMP in these fields. Other applications come to mind: material science \citep{schutt2018schnet}, computational biology \citep{gainza2020deciphering}, combinatorial optimization \citep{khalil2017learning,li2018combinatorial,joshi2019efficient} or code generation \citep{brockschmidt2018generative}.

\begin{ack}
Clément Vignac would like to thank the Swiss Data Science Center for supporting him through the PhD fellowship program (grant P18-11).
Andreas Loukas would like to thank the Swiss National Science Foundation for supporting him in the context of the project ``Deep Learning for Graph-Structured Data’' (grant number PZ00P2 179981).
\end{ack}

\bibliographystyle{unsrtnat}
\bibliography{biblio}
\newpage
\appendix


\section{Proof of Theorem 1} \label{proof:theorem1}
Let $f$ be a layer of SMP:
\[
f(\tU, \tY, \mA)[i,:,:] = u(\mU_i, \phi(\{ m(\mU_i, \mU_j, \vy_{ij}) \}_{v_j \in N_i}))
 = u(\mU_i, \phi( \{m(\mU_i, \mU_j, \vy_{ij}) \}_{v_j: \emA[{i,j}] > 0}) )
\]

The action of a permutation $\pi$ on the inputs is defined as 
$f(\pi. (\tU, \tY, \mA)) = f(\pi . \tU, \pi . \tY, \pi . \mA)$.
In order to simplify notation, we will consider $\pi^{-1}$ instead of $\pi$. We have for example $(\pi^{-1}.~ \mA)[{i,j}] = \emA[{\pi_i, \pi_j}]$ and $(\pi^{-1} . \tU)[i,j,k] = \tU[\pi_i, \pi_j, k]$, which can be written as
$$(\pi^{-1} . \tU)[i,:,:] = \mPi ~ \bm{U}_{\pi_i}.$$

As shown next, the theorem's conditions suffice to render SMP equivariant:
\begin{align*}
		f(\pi^{-1} . (\tU, \tY, \mA))[i,:,:] &= u(\mPi ~ \mU_{\pi_i }, ~\phi(\{ m(\mPi ~\mU_{\pi_i},~ \mPi ~\mU_{\pi_j }, ~ \vy_{\pi_i \pi_j})\}_{v_j: A[{\pi_i, \pi_j}] > 0}))  \\
		&= u(\mPi ~ \mU_{\pi_i }, ~\phi(\{m(\mPi ~\mU_{\pi_i}, ~\mPi ~\mU_k,~  \vy_{\pi_i k} )\}_{v_k: A[{\pi_i, k}] > 0}))   && (\pi \text{ bijective) } \\
		&= u(\mPi ~ \mU_{\pi_i }, \mPi ~ \phi (\{m(\mU_{\pi_i}, \mU_k, \vy_{\pi_i k})\}_{v_k:A[{\pi_i, k}] > 0} )) && (\phi \text,~ m \text{ equivariant})\\
		&= \mPi ~ u(\mU_{\pi_i }, \phi 
		(\{m(\mU_{\pi_i}, \mU_k, \vy_{\pi_i k})\}_{ v_k:A[{\pi_i, k}] > 0 } )) && (u \text{ equivariant} )\\
		& = \mPi ~ f(\tU, \tY, \mA)[\pi_i,:,:] \\
		& = (\pi^{-1} . f(\tU, \tY, \mA))[i,:,:],
\end{align*}
which matches the definition of equivariance.

\section{Proof of Theorem~\ref{theorem:representation_power_informal}} 
\label{appendix:proof-turing}
We first present the formal version of the theorem:

\begin{theorem}[Representation power -- formal ]
Consider the class $S$ of simple graphs with $n$ nodes, diameter at most $\diam$ and degree at most $\dmax$. 
We assume that these graphs have respectively $c_X$ and $c_Y$ attributes on the nodes and the edges.
 There exists a permutation equivariant SMP network $f :\R^{n \times n} \mapsto \R^{n \times n \times c}$ of depth at  most $\diam + 1$ and width at most $2 \dmax + c_X + n ~c_Y$ such that, for any two graphs $G$ and $G'$ in $S$ with respective adjacency matrices, node and edge features $\bm{A}, \mX, \tY$ and $\bm{A}', \mX', \tY'$, the following statements hold for every  $v_i \in V$ and $v_j \in V'$:
\begin{itemize}
\item If $G$ and $G'$ are not isomorphic, then for all $\pi \in \mathfrak{S_n}$, $$\bm{\Pi}^T f(\bm{A}, \mX, \tY)[i,:,:] \neq f(\bm{A}', \mX', \tY')[j,:,:].$$ 
\item If $G$ and $G'$ are isomorphic, then for some $\pi \in \mathfrak{S}_n$ independent of $i$ and $j$, $$\bm{\Pi}^T f(\bm{A}, \mX, \tY)[i,:,:] = f(\bm{A}', \mX', \tY')[j,:,:].$$
\end{itemize}
\label{theorem:representation_power}
\end{theorem}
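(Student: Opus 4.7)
The strategy is to construct SMP layers which, after $\diam$ rounds of propagation, have filled the local context $\mU_i$ of every node $v_i$ with a canonical encoding of the full attributed graph viewed with $v_i$ as a distinguished vertex, and to use one additional layer to compress the adjacency information to width $2\dmax$ via Lemma~\ref{lemma:embedding}.

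I would first handle the wider, easier case where each row of $\mU_i$ is allowed to hold an explicit $n$-dimensional ``adjacency'' block. Initialize row $j$ of $\mU_i^{(0)}$ with the one-hot indicator $\mathds{1}_{j=i}$, and reserve slots for the features $\vx_j$ and for the row of incident edge attributes $(\vy_{jk})_k$ (filled only in the diagonal row $j=i$ at initialization). Define the message as the row-wise identity $m(\mU_i,\mU_j,\vy_{ij})=\mU_j$, augmented by an extra channel that writes $\vy_{ij}$ into the $v_i$-slot of every transmitted row; define the update as a row-wise rule that, at each row $j$, keeps the earliest non-zero entries and merges in the newly received ones. Both maps act row-wise, so Theorem~\ref{theorem:equivariance} immediately yields permutation equivariance. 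A short induction on $\ell$ shows that row $j$ of $\mU_i^{(\ell)}$ has been populated iff $v_j$ lies within $\ell$ hops of $v_i$, and records $\vx_j$ together with the edge attributes incident to $v_j$. After $\ell=\diam$ rounds every row is filled (since the graph has diameter at most $\diam$) while the one-hot slot still marks $v_i$'s position, so $\mU_i^{(\diam)}$ determines the attributed graph together with its distinguished vertex.

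Next, I would reduce the width from the naive $1+c_X+n+n\,c_Y$ down to $2\dmax+c_X+n\,c_Y$ by applying Lemma~\ref{lemma:embedding}: it yields a unit-norm embedding $\mX\in\R^{n\times 2\dmax}$ with $(v_j,v_k)\in E\iff\langle\mX_j,\mX_k\rangle=0$, so replacing the length-$n$ adjacency row by the length-$2\dmax$ embedding row loses no structural information. A row-wise post-processing performed by the additional $(\diam+1)$-th layer carries out this reduction while preserving equivariance, giving the announced depth and width bounds.

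Finally, to cash out the two bullets I would observe that $\mU_i^{(\diam+1)}$, viewed as an unordered set of rows together with the distinguished row, canonically determines the attributed isomorphism type of $(G,\mX,\tY)$ with marked vertex $v_i$. Non-isomorphic $(G,\mX,\tY)$ and $(G',\mX',\tY')$ therefore yield different canonical forms at every pair of nodes, no matter how their rows are aligned by $\mPi$, which proves the first bullet. For the second bullet, an isomorphism $\pi\in\mathfrak{S}_n$ between the two graphs matches, at initialization, the inputs of $G$ to those of $G'$ up to the action of $\pi$; equivariance of every layer propagates this identity through all $\diam+1$ rounds, so that $\mPi^T f(\mA,\mX,\tY)[i,:,:]=f(\mA',\mX',\tY')[\pi(i),:,:]$ holds simultaneously for every $i$, with a single $\pi$. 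The main technical obstacle I anticipate is the row-wise bookkeeping in the construction: ensuring that the propagation rule merges structural information across layers without overwriting, stays equivariant under arbitrary row-permutations, and correctly threads the $n\,c_Y$ edge-attribute block through messages and updates; Lemma~\ref{lemma:embedding} itself is invoked as a black box.
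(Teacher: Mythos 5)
Your plan follows the paper's warm-up stage (propagate receptive-field adjacency information for $\diam$ rounds, then read the whole attributed graph off any node's context), and the induction over $E_i^{(l)}$, the fill-in rule for node attributes, and the isomorphic/non-isomorphic dichotomy all match the paper. But the two points where you deviate are exactly where the theorem's content lies, and both are genuine gaps. First, the width bound: in your construction every intermediate context carries an explicit length-$n$ adjacency block (plus the $n\,c_Y$ edge block), so the intermediate layers have width about $1+n+c_X+n\,c_Y$, and Lemma~\ref{lemma:embedding} is invoked only once as a final compression. That proves a statement with width $\Theta(n)+c_X+n\,c_Y$, not the claimed $2\dmax+c_X+n\,c_Y$. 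The paper needs Lemma~\ref{lemma:embedding} \emph{inside every layer}: the stored context is always an $n\times 2\dmax$ orthogonality embedding, the aggregation decompresses it (by testing $\langle \vu_j,\vu_k\rangle=0$) to recover $\mA_j^{(l)}$, takes the entrywise max over the neighbourhood, and the update re-embeds $\mA_i^{(l+1)}$ into $n\times 2\dmax$ again; only the internal computation of a layer touches $n\times n$ objects, so the layer widths stay at $2\dmax+c_X+n\,c_Y$.

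Second, equivariance. Your claim that ``both maps act row-wise, so Theorem~\ref{theorem:equivariance} immediately yields permutation equivariance'' fails for the data you store. SMP equivariance permutes the rows of $\mU_i$ but never the channel axis ($\tU[i,j,k]\mapsto\tU[\pi^{-1}(i),\pi^{-1}(j),k]$ leaves $k$ fixed), so a context whose row $j$ contains the raw adjacency row $\mA[j,:]$, or the list $(\vy_{jk})_k$ indexed by node identity $k$, cannot be produced by an equivariant network: under relabelling those channel contents would have to be permuted. Likewise, ``writing $\vy_{ij}$ into the $v_i$-slot'' requires turning a row index into a channel index, which no row-wise map can do. You flag this bookkeeping as an anticipated obstacle, but it is the crux rather than a technicality: the paper circumvents it by encoding adjacency through orthogonality conditions (constructed equivariantly from an eigendecomposition at every layer) and, for edge attributes, by replacing each partially reconstructed matrix $\mE_i^{(l)}$ with the symmetric square root $\mV(\mLambda+\lambda\mI)^{1/2}$, which transforms as $\mPi^T\mU$ and still determines $\mE_i^{(l)}$ (this is why the edge block costs $n\,c_Y$ channels). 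Without these two devices your construction either breaks equivariance or misses the stated width bound, so as written it establishes only a weaker version of the theorem.
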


The fact that embeddings produced by isomorphic graphs are permutations one of another is a consequence of equivariance, so we are left to prove the first point. To do so, we will first ignore the features and prove that there is an SMP that maps the initial one-hot encoding of each node to an embedding that allows to reconstruct the adjacency matrix. The case of attributed graphs and the statement of the theorem will then follow easily.

Consider a simple connected graph $G = (V, E)$. For any layer $l\in \mathbb N$ and node $v_i \in V$, we denote by $G_i^{(l)} = (V, E_i^{(l)})$ the graph with node set $V$ and edge set 
$$E_i^{(l)} = \{ (v_p, v_q) \in E, ~d(v_i, v_p) \leq l, ~d(v_i, v_q) \leq l, ~ d(v_i, v_p) + d(v_i, v_q) < 2l \}.$$ 
These edges correspond to the receptive field of node $v_i$ after $l$ layers of message-passing. We denote by $\mA_i^{(l)}$ the adjacency matrix of $G_i^{(l)}$.

\subsection{Warm up: nodes manipulate n x n matrices}

To build intuition, it is useful to first consider the case where $\bm{U}_i$ are $n \times n$ matrices (rather than $n \times c$ as in SMP). In this setting, messages are $n \times n$ matrices as well.
If the initial state of each node $v_i$ is its one-hop neighbourhood ($\mU_i^{(1)} = \mA_i^{(1)}$), then each node can easily recover the full adjacency matrix by updating its internal state as follows:
\begin{align}
    \mU_i^{(l + 1)} = \max_{v_j \in N_i ~\cup~ v_i} \{\mU_j^{(l)}\}, 
    \label{eq:update-equation-matrix}
\end{align}
where the max is taken element-wise.
\begin{lemma}
Recursion~\ref{eq:update-equation-matrix} yields $\mU_i^{(l)}= \mA_i^{(l)}$. 
\label{lemma:warmup}
\end{lemma}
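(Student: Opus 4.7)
The plan is to prove the lemma by induction on $l$. The base case $l = 1$ holds by construction, since the problem statement tells us $\mU_i^{(1)} = \mA_i^{(1)}$ is the initial state. For the inductive step, assuming $\mU_j^{(l)} = \mA_j^{(l)}$ for every node $v_j$, I would unfold the recursion and show, entrywise, that $\max_{v_j \in N_i \cup \{v_i\}} \mA_j^{(l)} = \mA_i^{(l+1)}$. Since all matrices involved are $0/1$-valued, this reduces to the set identity $E_i^{(l+1)} = \bigcup_{v_j \in N_i \cup \{v_i\}} E_j^{(l)}$, which I would prove by double inclusion using the shortest-path definition of the edge sets.

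For the inclusion $\supseteq$, suppose $(v_p, v_q) \in E_j^{(l)}$ for some $v_j$ with $d(v_i, v_j) \leq 1$. The triangle inequality gives $d(v_i, v_p) \leq d(v_i, v_j) + d(v_j, v_p) \leq 1 + l$, and similarly for $v_q$, so both endpoints lie within the $(l+1)$-hop ball of $v_i$. Adding the two bounds and using $d(v_j, v_p) + d(v_j, v_q) < 2l$ yields $d(v_i, v_p) + d(v_i, v_q) < 2l + 2$, so $(v_p, v_q) \in E_i^{(l+1)}$. This direction is essentially immediate from the triangle inequality.

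For the inclusion $\subseteq$, which I expect to be the main obstacle, take $(v_p, v_q) \in E_i^{(l+1)}$ and, without loss of generality, assume $d(v_i, v_p) \leq d(v_i, v_q)$. The condition $d(v_i, v_p) + d(v_i, v_q) \leq 2l + 1$ then forces $d(v_i, v_p) \leq l$. If $v_i \in \{v_p, v_q\}$ we simply pick $v_j = v_i$ and check the definition directly. Otherwise I would let $v_j$ be the second vertex on a shortest path from $v_i$ to $v_p$, so that $d(v_j, v_p) = d(v_i, v_p) - 1$ and, using $(v_p, v_q) \in E$, $d(v_j, v_q) \leq d(v_j, v_p) + 1 = d(v_i, v_p) \leq l$. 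Summing gives $d(v_j, v_p) + d(v_j, v_q) \leq 2 d(v_i, v_p) - 1 \leq 2l - 1 < 2l$, so $(v_p, v_q) \in E_j^{(l)}$ as required. The subtle point here is extracting the strict inequality $< 2l$ from the non-strict bound $d(v_i, v_p) \leq l$, which is exactly what the $-1$ from walking one step along the shortest path buys us.

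Combining the two inclusions yields the entrywise equality $\mU_i^{(l+1)} = \mA_i^{(l+1)}$, closing the induction. The argument also makes clear why the recursion terminates after at most $\diam$ steps: once $l$ reaches the eccentricity of $v_i$, every edge of $G$ satisfies the constraints in $E_i^{(l)}$, so $\mU_i^{(l)} = \mA$, which is the ingredient needed for Theorem~\ref{theorem:representation_power_informal} in the $n \times n$ case.
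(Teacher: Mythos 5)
Your proof is correct and follows essentially the same route as the paper's: induction on $l$, with the forward inclusion obtained from the triangle inequality and the reverse inclusion obtained by passing to a neighbor $v_j$ of $v_i$ on a shortest path to the closer endpoint of the edge. If anything, your treatment of the converse direction (extracting the strict bound $d(v_j,v_p)+d(v_j,v_q) < 2l$ and handling the degenerate case $v_i \in \{v_p, v_q\}$) is more explicit than the paper's rather terse argument, but the underlying idea is identical.
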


\begin{proof}
We prove the claim by induction. It is true by construction for $l=1$. For the inductive step, suppose that $\mU_i^{(l)}= \mA_i^{(l)}$. Then, 
\begin{align*}
\mU_i^{l + 1}[p, q] = 1 &\iff \exists ~ v_j \in \{N_i \cup v_i\} \quad \text{such that} \quad \mA_j^{(l)}[p, q] = 1 \\ 
&\hspace{-2cm}\iff (v_p, v_q) \in E \text{ and } \exists ~ v_j \in \{N_i \cup v_i\}, ~d(v_j, v_p) \leq l, ~d(v_j, v_q) \leq l, ~ d(v_j, v_p) + d(v_j, v_q) < 2l \\
&\hspace{-2cm}\implies (v_p, v_q) \in E, ~ d(v_i, v_p) \leq l + 1, ~d(v_i, v_q) \leq l + 1, ~ d(v_i, v_p) + d(v_i, v_q) < 2(l+1) \\
&\hspace{-2cm}\implies \mA_i^{(1+1)}[p, q] = 1
\end{align*}
Conversely, if $\mA_i^{(l+1)}[p, q] = 1$, then there exists either a path of length $l$ of the form $(v_i, v_j, \ldots, v_p)$ or $(v_i, v_j, \ldots, v_q)$. This node $v_j$ will satisfy $\mU_j^{(l)}[p, q] = 1$ and thus $\mU_i^{(l+1)}[p, q] = 1$.
\end{proof}
It is an immediate consequence that, for every connected graph of diameter $\diam$, we have $\mU_i^{(\diam)} = \mA$.

\subsection{SMP: nodes manipulate node embeddings}

We now shift to the case of SMP. We will start by proving that we can find an $n \times 2\dmax$ embedding matrix (rather than $n \times n$) that still allows to reconstruct $\mA_i^{(l)}$. For this purpose, we will use the following result:

\begin{lemma}[\citet{maehara1990dimension}] For any simple graph $G = (V, E)$ of $n$ nodes and maximum degree $\dmax$, there exists a unit-norm embedding of the nodes $\mX \in \R^{n \times 2 \dmax}$ such that $$ \forall (v_i,v_j) \in V^2, ~(v_i, v_j) \in E \iff \mX_i \perp \mX_j.$$
\end{lemma}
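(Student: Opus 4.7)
The plan is to construct the embedding $\mX$ from combinatorial ingredients---edge coloring, orientations, and generic perturbations---while working in the $\dmax$-fold direct sum of two-dimensional planes that make up $\R^{2\dmax}$. The skeleton is an edge coloring of $G$.

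First I would invoke Vizing's theorem to obtain a proper edge coloring of $G$ with at most $\dmax + 1$ colors, so that each color class is a matching. I would then identify $\R^{2\dmax}$ with $\dmax$ orthogonal planes $P_1, \ldots, P_{\dmax}$, each spanned by a pair $(e_k^+, e_k^-)$, and assign colors to planes so that for each edge $(u,v)$ of color $c$ the two endpoints agree on the plane $P_k$ used. Within the shared plane, fixing an arbitrary vertex ordering, $u$ would contribute $\alpha_{uv}\, e_k^+$ to $\mX_u$ and $\alpha_{uv}\, e_k^-$ to $\mX_v$ for a coefficient $\alpha_{uv} > 0$. Since Vizing provides $\dmax + 1$ colors while only $\dmax$ planes are available, I would exploit the fact that each vertex has at most $\dmax$ incident colors (hence at least one free color) to fold the extra color locally into a reused plane.

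Under this construction the forward direction---adjacent vertices have orthogonal embeddings---would reduce to checking two things: that in the shared plane $P_k$ the contributions of $u$ and $v$ lie along the orthogonal directions $e_k^+$ and $e_k^-$, and that in the remaining $\dmax - 1$ planes at most one of $u, v$ has a nonzero component because each color class is a matching. Summing across planes would give $\langle \mX_u, \mX_v \rangle = 0$, and a final normalization of each vertex vector would restore unit length.

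The main obstacle is twofold. First, achieving global consistency of the color-to-plane assignment with only $\dmax$ planes is subtle: it does not follow from a $\dmax$-edge-coloring of $G$ (which may not exist), so some vertex will necessarily reuse a plane across two incident edges, and I would need to show that this can still be done without breaking orthogonality---perhaps by choosing the intra-plane directions more carefully than just $e_k^\pm$, or by exploiting an acyclic orientation that separates in-edges from out-edges per plane. Second, the reverse direction---non-adjacent vertices are non-orthogonal---requires a genericity argument on the coefficients $\alpha_{uv}$: the equation $\langle \mX_u, \mX_v \rangle = 0$ cuts out a proper algebraic subvariety in the space of parameter vectors, so for Lebesgue-almost every choice of coefficients no non-edge pair is orthogonal, and a union bound over the finitely many non-edges yields a deterministic witness. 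I expect the first point, squeezing the embedding into $2\dmax$ dimensions rather than the more obvious $2(\dmax + 1)$, to be the genuinely hard part of the argument.
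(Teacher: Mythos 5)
First, a point of reference: the paper does not prove this lemma at all---it is imported as a black box from \citet{maehara1990dimension}---so the comparison here is purely about whether your argument stands on its own. It does not, and the gaps are in both directions of the equivalence. For the forward direction, the claim that ``in the remaining $\dmax-1$ planes at most one of $u,v$ has a nonzero component because each color class is a matching'' is false: a matching only forbids two edges of the \emph{same} color from sharing a vertex, but both endpoints $u$ and $v$ of an edge may each have their own incident edge of some other color $c'$, so both carry a component in the plane $P_{k'}$. With directions fixed by an arbitrary vertex ordering these components can both lie along $e_{k'}^{+}$, contributing $\alpha_{ux}\alpha_{vy}>0$ to $\langle \mX_u,\mX_v\rangle$, so adjacent vertices are in general \emph{not} orthogonal. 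Repairing this would require a sign assignment $s(v,c')\in\{+,-\}$ with $s(u,c')\neq s(v,c')$ for every edge $(u,v)$ whose endpoints both see color $c'$, i.e.\ a proper $2$-coloring of certain subgraphs of $G$, which odd cycles rule out; the proposal does not supply this, and the unresolved folding of the $\dmax+1$ Vizing colors into $\dmax$ planes only creates more conflicts of the same kind.

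Second, the genericity argument for the reverse direction fails because you randomize only the scalars $\alpha_{uv}$ while pinning every within-plane component to one of two fixed basis vectors $e_k^{\pm}$. For a non-edge $(u,w)$ the polynomial $\langle \mX_u,\mX_w\rangle$ in these scalars can be \emph{identically} zero, so there is no proper subvariety to avoid and no choice of coefficients helps. Concretely, let $G$ consist of two disjoint edges $(u,x)$ and $(v,y)$, so $\dmax=1$ and there is a single plane: your rule makes $\mX_u\propto e^{+}$ and $\mX_y\propto e^{-}$, which are orthogonal although $u$ and $y$ are not adjacent; the same degeneracy occurs whenever two non-adjacent vertices have disjoint color supports. (Isolated vertices, which would receive the zero vector, are a further untreated corner case.) A route that does work---and the reason results of this type are usually proved without any edge coloring---is sequential generic placement: choose the vectors one vertex at a time, forcing each new vector to lie in the orthogonal complement of the span of its already-placed neighbors (a subspace of dimension at least $2\dmax-\dmax=\dmax\geq 1$) and picking it generically inside that subspace so that it misses the finitely many hyperplanes determined by already-placed non-neighbors, with genericity of the earlier choices guaranteeing those hyperplanes are proper. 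The essential ingredient is genericity of \emph{whole vectors}, which your rigid $e_k^{\pm}$ frame removes from the outset.
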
 
In the following we assume the perspective of some node $v_i \in V$. 
Let $\bm{U}_i^{(l)} \in \mathbb{R}^{n \times c_l}$ be the context of $v_i$. Further, write $\bm{u}_j^{(l)} = \bm{U}_i^{(l)}[j,:] \in \mathbb{R}^{c_l}$ to denote the embedding of $v_j$ at layer $l$ from the perspective of $v_i$. Note that, for simplicity, the index $i$ is omitted.

\begin{lemma}
There exists a sequence $(f_l)_{l \geq 1}$ of permutation equivariant SMP layers defining
$ \bm{U}_i^{(l+1)} = f^{(l+1)}(\bm{U}_i^{(l)}, \{\bm{U}_j^{(l)}\}_{j \in N_i}) $
such that $ \bm{u}_j^{(l)} \perp \bm{u}_k^{(l)} \iff (v_j, v_k) \in E_i^{(l)}$ for every layer $l$ and nodes $v_j, v_k \in V$. These functions do not depend on the choice of $v_i \in V$.
\end{lemma}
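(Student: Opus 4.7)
The plan is to prove the lemma by induction on $l$, maintaining the invariant that the rows $\mathbf{u}_j^{(l)}$ of $v_i$'s local context form unit-norm vectors in $\mathbb{R}^{2\dmax}$ whose pairwise orthogonality pattern encodes exactly the edge set $E_i^{(l)}$. This is the compressed analogue of the warm-up Lemma~\ref{lemma:warmup}: instead of storing the full $n \times n$ adjacency $\mathbf{A}_i^{(l)}$ explicitly, each node stores a Maehara embedding of it (via Lemma~\ref{lemma:embedding}), which needs only $n \times 2\dmax$ space. Because the construction will depend only on which row is $v_i$'s own, which rows are ``visible'' so far, and the combinatorial pattern of orthogonalities, it will automatically inherit permutation equivariance from Theorem~\ref{theorem:equivariance}, and therefore not depend on the choice of $v_i$.

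For the base case $l=1$, the edge set $E_i^{(1)}$ consists exactly of the edges incident to $v_i$. Starting from the one-hot encoding $\mathbf{U}_i^{(0)} = \bm{\mathds 1}_i$, a single SMP layer can, using the indicator on the $i$-th row, set $\mathbf{u}_i^{(1)}$ to one unit vector, set $\mathbf{u}_j^{(1)}$ to a second (orthogonal) unit vector for every row reached by a message (i.e.\ for $v_j \in N_i$), and set all remaining rows to a third ``generic'' unit vector that is non-orthogonal to the first two. By construction, the only orthogonal row pairs are $(\mathbf{u}_i^{(1)}, \mathbf{u}_j^{(1)})$ with $v_j \in N_i$, which matches $E_i^{(1)}$ precisely.

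For the inductive step, assume the invariant at layer $l$. The key identity from the warm-up, $E_i^{(l+1)} = \bigcup_{v_j \in N_i \cup \{v_i\}} E_j^{(l)}$, shows that the receptive-field edge sets of $v_i$'s neighbours, together with its own, collectively determine $E_i^{(l+1)}$. Since each incoming context $\mathbf{U}_j^{(l)}$ encodes $E_j^{(l)}$ via orthogonality (by the inductive hypothesis), the layer $f^{(l+1)}$ acts as follows: the message function exposes the orthogonality pattern of the sender's context (e.g.\ by emitting the Gram matrix of its rows), the aggregation takes an element-wise OR over the resulting binary patterns, and the update function maps the combined pattern back into a Maehara embedding of the graph $(V, E_i^{(l+1)})$ using Lemma~\ref{lemma:embedding}.

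The main obstacle is this last step: realising the (discrete) map from a binary adjacency pattern to a valid Maehara embedding by an equivariant SMP layer. I would handle it by appealing to universal approximation of equivariant set networks \citep{zaheer2017deep,segol2019universal} applied row-wise, combined with the equivariant linear basis of \citet{maron2018invariant}; because the input is a finite binary object, any fixed deterministic and permutation-equivariant Maehara construction can be represented exactly rather than merely approximated. Rows corresponding to nodes outside $v_i$'s receptive field are handled as in the base case by assigning a generic vector that is non-orthogonal to all others, so that the ``iff'' direction of the invariant also covers invisible pairs. Once the structural invariant is established, node and edge attributes can be appended to the corresponding rows and propagated alongside the embedding without affecting the orthogonality pattern, yielding the full statement of Theorem~\ref{theorem:representation_power} with the claimed width bound $2\dmax + c_X + n\,c_Y$.
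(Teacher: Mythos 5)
Your proof follows essentially the same route as the paper's: induction on $l$ with the orthogonality invariant, the warm-up identity $E_i^{(l+1)} = \bigcup_{v_j \in N_i \cup \{v_i\}} E_j^{(l)}$ realized by decoding each incoming context through orthogonality tests and OR-aggregating, then re-embedding into an $n \times 2\dmax$ matrix via Lemma~\ref{lemma:embedding}, with equivariance argued the same way. The only deviations are minor: an explicit three-vector base case in place of the paper's factorization of $\bm{1}\bm{1}^\top - \mA_i^{(1)}$, and an unnecessary digression on realizing the re-embedding with universal approximators, since (as the paper itself does) the lemma's layers may simply be taken to be arbitrary equivariant message, aggregation and update functions.
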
 

\begin{proof}
We use an inductive argument. 
 An initialization (layer $l=1$), we have $\bm{U}_j^{(0)} = \bm{\delta}_j$ for every $v_j$. We need to prove that there exists $\bm{U}_i^{(1)} = f^{(1)}( \mU_i^{(0)}, \{ \bm{U}_j^{(0)} \}_{v_j \in N_i} )$ which satisfies
\[ 
\forall (v_j, v_k) \in V^2, ~ \bm{u}_j^{(1)} \perp \bm{u}_k^{(1)} \iff (v_j, v_k) \in E_i^{(1)}.
\]
Rewritten in matrix form, it is sufficient to show that there exists $\bm{U}_i^{(1)}$ such that $\bm{U}_i^{(1)} (\bm{U}_i^{(1)})^\top = \bm{1} \bm{1}^\top - \bm{A}_i^{(1)}$, with $\bm{1}$ being the all-ones vector. $ \bm{A}_i^{(1)} $ is the adjacency matrix of a star consisting of $v_i$ at the center and all its $d_i$ neighbors at the spokes. Further, it can be constructed in an equivariant manner from the layer's input as follows: 
$$
    \bm{A}_i^{(1)} = \sum_{v_j \in N_i} \bm{\delta}_{i} \bm{\delta}_{j}^\top + \sum_{v_j \in N_i} \bm{\delta}_{j} \bm{\delta}_{i}^\top.  
$$
Since the rank of $ \bm{A}_i^{(1)}$ is at most $d_i$ (there are $d_i$ non-zero rows), the rank of $\bm{1} \bm{1}^\top - \bm{A}_i^{(1)}$ is at most $d_i + 1 \leq 2 d_i \leq 2 d_{\textit{max}}$. It directly follows that there exists a matrix $\bm{U}_i^{(1)}$ of dimension $n \times 2d_{\textit{max}}$ which satisfies $\bm{U}_i^{(1)} (\bm{U}_i^{(1)})^\top = \bm{1} \bm{1}^\top - \bm{A}_i^{(1)}$. Further, as the construction of this matrix is based on the eigendecomposition of $\bm{A}_i^{(1)}$, it is permutation equivariant as desired.   

\emph{Inductive step.} According to the inductive hypothesis, we suppose that:
\[ 
\bm{u}_j^{(l)} \perp \bm{u}_k^{(l)} \iff (v_j, v_k) \in E_i^{(l)} \text{ for all } v_j, v_k \in V
\]
The function $f^{(l+1)}$ builds the embedding $\bm{U}_{i}^{(l+1)}$ from $(\bm{U}_i^{(l)}, \{\bm{U}_j^{(l)}, ~ v_j \in N_i\})$ in three steps: 
\begin{enumerate}[noitemsep, nolistsep]
    \item[Step 1.] Each node $v_j \in N_i$ sends its embedding $\mU_j^{(l)}$ to node $v_i$. This is done using the message function $m^{(l)}$.
    \item[Step 2.] The aggregation function $\phi$ reconstructs the adjacency matrix $\mA_j^{(l)}$ of $G_j^{(l)}$ from  $\bm{U}_j^{(l)}$ for each $v_j \in N_i \cup \{v_i\}$.
    This is done by testing orthogonality conditions, which is a permutation equivariant operation. Then, it computes 
    $\mA_i^{(l+1)}$ as in Lemma~\ref{lemma:warmup} using $\mA_i^{(l+1)} = \max(\{\mA_j^{(l+1)}\}_{v_j \in N_i \cup \{v_i\}})$, with the maximum taken entry-wise.
	\item[Step 3.] The update function $u^{(l)}$ constructs an embedding matrix $\bm{U}_{i}^{(l+1)} \in \R^{n \times 2 \dmax}$ that allows to reconstruct $\mA_i^{(l+1)}$ through orthogonality conditions. The existence of such an embedding is guaranteed by Lemma~\ref{lemma:embedding}. This operation can be performed in a permutation equivariant manner by ensuring that the order of the rows of $\bm{U}_{i}^{(l+1)}$ is identical with that of $\mA_i^{(l+1)}$.
\end{enumerate}
Therefore, the constructed embedding matrix $\bm{U}_{i}^{(l+1)}$ satisfies 
\[ 
\bm{u}_j^{(l+1)} \perp \bm{u}_k^{(l+1)} \iff (v_j, v_k) \in E_i^{(l+1)} \text{ for all } v_j, v_k \in V
\]
and the function $f^{(l+1)}$ is permutation equivariant (as a composition of equivariant functions).
\end{proof}

It is a direct corollary of Lemma~\ref{lemma:embedding} that, when
%
%
the depth is at least as large as the graph diameter, such that $E_i^{(l)} = E$ for all $v_i$ and the width is at least as large as $2 \dmax$,
then there exist a permutation equivariant SMP $f = f^{(L)} \circ \ldots \circ f^{(1)}$ that induces an injective mapping from the adjacency matrix $\mA$ to the local context $\mU_i$ of each node $v_i$. 
As a result, given two graphs $G$ and $G'$, if there are two nodes $v_i \in V$ and $v'_j \in V'$ and a permutation $\pi \in \mathfrak{S}_n$ such that $\mU_i^{(L)} = \mPi^T ~\mU'^{(L)}_j$, then the orthogonality conditions will yield $\mA = \mPi^T ~\mA'~ \mPi$. The contraposition is that if two nodes belong to graphs that are not isomorphic, their embedding will belong to two different equivalence classes (i.e. they will be different even up to permutations).

\subsection{Extension for attributed graphs}

For attributed graphs, the reasoning is very similar: we are looking for a SMP network that maps the attributes to a set of local context matrices such that all the attributes of the graph can be recovered from the context matrix at any node. We treat the case of node and edge attributes separately:

\paragraph{Node attributes}

Using $c_X$ extra channels in SMP is sufficient to create the desired embedding. We recall that the input to the SMP is a local context such that the $i$-th row of $v_i$ contains $[1, \vx_i]$, where $\vx_i$ is the vector of attributes of $v_i$, while the other rows are zero. Ignoring the first entry of this vector (which was used to reconstruct the topology), we propose the following update rule:
\begin{equation}
\mU_i^{(l+1)} = \mU_{j_0}^{(l)} \quad \text{where} \quad j_0 = \text{argmax}(\{|\mU_j^{(l)}| \}_{j \in \{v_i \cup N_i\}})
\label{eq:update_rule}
\end{equation}
where the max is taken element-wise on each entry of the matrix.
This function is simply an extension of the max aggregator that allows to replace the zeros of the local context by non zero values, even if they are negative. Using it, each node can progressively fill the rows corresponding to nodes that are more and more distant. With the assumption that the graph is connected, each node will eventually have access to all node features.

\paragraph{Edge attributes}

As each edge attribute can be seen as a $n \times n$ matrix, edges attributes are handled in a very similar way as the adjacency matrix of unattributed graphs. If nodes could send $n \times n$ matrices as messages, they would be able to recover all the edge features using the previous update rule (equation \ref{eq:update_rule}). However, SMP manipulates embeddings that transform under the action of a permutation as $\pi ~.~ \mU_i = \mPi^T \mU_i$, whereas a $n\times n$ matrix $\mM$ transforms as $\pi ~.~ \mM = \mPi^T \mM ~\mPi$. As a result, we cannot directly pass the incomplete edge features as messages, and we need to embed them into a matrix that permutes in the right way. \\

The construction of a SMP that embeds the input to local contexts that allow to reconstruct an edge feature matrix $\mE$ is the same as for the adjacency matrix, except for one difference: lemma \ref{lemma:embedding}, which was used to embed adjacency matrices into a smaller matrix cannot be used anymore, as it is specific to unweighted graphs. Therefore, we propose another way to embed each matrix $\mE_i^{(l)}$ obtained at node $v_i$ after $l$ message passing layers:
\begin{itemize}
    \item For undirected graphs, $\mE_i^{(l)}$ is symmetric. We can therefore compute its eigendecomposition $\mE_i^{(l)} = \mV \mLambda \mV^T$.
    \item We add a given value $\lambda$ to the diagonal of $\mLambda$ to make sure that all coefficients are non-negative.
    \item We compute the square root matrix $\mU = \mV (\mLambda + \lambda \mI)^{1/2}$. This matrix permutes as desired under the action of a permutation: $\pi~.~ \mU = \mPi^T \mU$. In addition, it allows to reconstruct the matrix $\mE_i^{(l)} = \mU \mU^T$, so that it constitutes a valid embedding for the rest of the proof.
\end{itemize}

Note that the square root matrix permutes as desired, but that it does not compress the representation of $\mE_i^{(l)}$: for each edge features, $n$ additional channels are needed, so that a SMP should have $n \times c_Y$ more channels to be able to reconstruct all edge features.

\subsection{Conclusion}
We have shown that there exists an SMP that satisfies the conditions of the theorem, and specifically, we demonstrated that each layer can be decomposed in a message, aggregation and update functions that should be able to internally manipulate $n \times n$ matrices in order to produce embeddings of size $n \times 2\dmax + c_X + n ~c_Y$.

The main assumption of 
our proof is that the aggregation and update functions can \textit{exactly} compute any function of their input --- this is impossible in practice. 
An extension of our argument to a universal approximation statement would entail substituting the aggregation and update functions by appropriate universal approximators. In particular, the aggregation function manipulates a set of $n \times c$ matrices, which can be represented as a $n \times n \times c$ tensor with some lines zeroed out. Some universal approximators of equivariant functions for these tensors are known \citep{keriven2019universal}, but they have large memory requirements. Therefore, proving that a given parametrization of an SMP can be used to approximately reconstruct the adjacency matrix hinges on the identification of a simple universal approximator of equivariant functions on $n \times n \times c $ tensors.

\section{Proof of Corollary \ref{cor:universality}} \label{proof: universality}

Lemma~\ref{lemma:embedding} proves the existence of an injective mapping from adjacency matrices of simple graphs to features for a set of nodes. Therefore, any permutation equivariant function $h_{\text{eq}}(\bm{A})$ on adjacency matrices can be expressed by an equivariant function on sets
$$ 
h_{\text{eq}}(\bm{A}) = h_{\text{eq}}'(\mU) \quad \text{with} \quad  \mU[i, :] = \vu_i \in \R^{2d_\text{max} + c_X + n ~ c_Y}\quad \forall v_i \in V,
$$ 
as long as the node embeddings $\bm{u}_1, \ldots, \bm{u}_n$ allow the reconstruction of $\bm{A}$, e.g., through orthogonality conditions. 
 It was proven in Theorem~\ref{theorem:representation_power} that, under the corollary's conditions, the local context $\mU_i^{(L)}$ of any node $v_i$ yields an appropriate matrix $\mU$. 
In order to compute $h_\text{eq}$, each node can then rely on the universal $\sigma$ to compute the invariant function:
$$ h''_\text{inv}(\mU, \bm{\mathds{1}}_i) = h'_\text{eq}(\mU)[i, :] = h_\text{eq}(\mA)[i, :] \in \R^{c}.
$$

For invariant functions $h_\textit{in}(\mA) \in \R^c$, it suffices to build the equivariant function $h_\textit{eq}(\mA) = [h_\textit{in}(\mA), \ldots, h_\textit{in}(\mA)] \in \R^{n \times c}$.
Then, if each node $v_i$ computes $h_\text{eq}(\mA)[i,:] = h_\textit{in}(\mA)$, averaging will yield $\frac{1}{n} \sum_{v_i \in V} h_\text{eq}(\mA)[i,:] = h_\textit{in}(\mA)$, as required.

\section{Proof of Proposition \ref{prop:strictly}} \label{proof-prop1}

\paragraph{SMPs are at least as powerful as MPNNs}
We will show by induction that any MPNN can be simulated by an SMP:

\begin{lemma}
For any MPNN mapping initial node features $(\vx_i^{(0)})_{v_i \in V}$ to $(\vx_i^{(L)})_{v_i \in V}$, there is an SMP with the same number of layers such that 
$$ \forall ~v_i \in V, ~\forall ~l \leq L, ~ \tU^{(l)}[i, i, :] = \vx_i^{(l)} \quad \text{and}\quad  \forall j \neq i,~  \tU^{(l)}[i, j, :] = 0. $$
\end{lemma}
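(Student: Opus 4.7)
The plan is to proceed by induction on $l$, maintaining the invariant that $\mU_i^{(l)}$ carries $\vx_i^{(l)}$ on row $i$ and zeros on every other row of the feature channels, while a dedicated first column permanently stores the indicator $\bm{\mathds{1}}_i$ and serves as an equivariant mask. The base case $l=0$ is given directly by the SMP initialization, which sets $\mU_i^{(0)}[i,:] = [1, \vx_i^{(0)}]$ and $\mU_i^{(0)}[j,:] = [0, 0]$ for $j \neq i$, so the invariant holds before any message-passing step.

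For the inductive step, I would construct equivariant message, aggregation, and update functions that preserve the invariant. The key observation is that under the invariant, $\vx_i^{(l)}$ can be recovered from $\mU_i^{(l)}$ by summing the feature-channel entries across all rows, an operation that is permutation-invariant. I would then define the SMP message as the outer product $m^{(l)}(\mU_i, \mU_j, \vy_{ij}) = \mU_i[:, 1]\, \bigl(m_{\text{MPNN}}^{(l)}(\vx_i, \vx_j, \vy_{ij})\bigr)^\top$, where $\vx_i, \vx_j$ are extracted by row-summation. This matrix is zero everywhere except on row $i$, where it equals the MPNN message. The aggregation $\phi$ would be the plain sum over neighbors, which preserves both equivariance and the fact that the support lies on row $i$. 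The update $u^{(l)}$ would then row-sum $\tilde{\mU}_i^{(l)}$ to recover the aggregated MPNN message, apply $u_{\text{MPNN}}^{(l)}$ together with $\vx_i^{(l)}$, broadcast the result back onto row $i$ via an outer product with the indicator column $\mU_i[:,1]$, and copy the first column of $\mU_i^{(l)}$ unchanged so that the identifier remains available for subsequent layers.

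Checking the hypotheses of Theorem~\ref{theorem:equivariance} for this choice should be routine, because row-summation, pointwise MLPs, outer products with the indicator column, and neighbor-wise summation all commute with row permutations acting on the node axis of the local contexts. The main obstacle, and the reason the outer-product construction is required, is that the MPNN message at $v_i$ depends jointly on $\vx_i$ and $\vx_j$, which the invariant stores on \emph{different} rows ($i$ and $j$) of their respective contexts $\mU_i$ and $\mU_j$. Recovering each feature by row-summation and then broadcasting back through the indicator column $\mU_i[:,1]$ is precisely what localizes the computation to row $i$ without hard-coding any row index, which would violate equivariance. Completing the induction up to $l = L$ then yields the required SMP.
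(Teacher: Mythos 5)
Your construction is, at heart, the same argument as the paper's: keep the MPNN state on the row of the local context singled out by the node's own identifier, and use that identifier to localize the computation equivariantly (the paper broadcasts the diagonal entry to every row with $\bm{1}\bm{1}^\top$, applies the MPNN maps rowwise, and then masks back to the diagonal with $\delta_{i,j}$; you extract by row-sums and re-localize by an outer product with the indicator column $\mU_i[:,1]$ --- equivalent mechanics, and your equivariance checks against Theorem~\ref{theorem:equivariance} go through).

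There is, however, one genuine gap: you hard-code the SMP aggregation to be a plain sum and then claim that row-summing $\tilde{\mU}_i^{(l)}$ ``recovers the aggregated MPNN message.'' That is only true when the MPNN's own aggregator is the sum. The lemma quantifies over \emph{any} MPNN, whose layer is $\vx_i^{(l+1)} = u(\vx_i^{(l)}, \phi(\{m(\vx_i^{(l)},\vx_j^{(l)},\vy_{ij})\}_{j\in N_i}))$ with an arbitrary symmetric $\phi$ (max, mean, or any other multiset function). Summing the per-edge message matrices collapses the multiset $\{m_{\mathrm{MPNN}}(\vx_i,\vx_j,\vy_{ij})\}_{j\in N_i}$ to its sum, so the update function can no longer evaluate $\phi$ for, say, a max-aggregating MPNN; your SMP would instead simulate a different (sum-aggregating) network. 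The fix is easy and keeps your scheme intact: let the SMP aggregation itself apply $\phi_{\mathrm{MPNN}}$ to the multiset of per-neighbor vectors recovered from each message (e.g., by row-summing each message matrix), and re-localize the result onto row $i$ using the indicator that the update function can read off $\mU_i^{(l)}$; this is exactly how the paper handles it, by applying $m$, $\phi$ and $u$ simultaneously to each row. A last cosmetic point: since you retain the identifier channel, the diagonal entry is $[1,\vx_i^{(l)}]$ rather than $\vx_i^{(l)}$ alone, which matches the paper's own initialization and is harmless, but worth stating when you assert the invariant of the lemma.
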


\begin{proof}
Consider a graph with node features $( \bm{x}_i^{(0)} )_{v_i \in V}$ and edge features $({ \vy_{ij} })_{(v_i, v_j) \in E}$. 

\emph{Initialization}: 
The context tensor is initialized by mapping the node features on the diagonal of $\tU$: $\tU^{(0)}[i, i, \colon] = \vx_i^{(0)}$. The desired property is then true by construction.

\emph{Inductive step}: Denote by  $( \vx_i^{(l)} )_{v_i \in V} $ the features obtained after $l$ layers of the MPNN. Assume that there is a k-layer SMP such that the local context after $l$ layers contains the same features in its diagonal elements: $\tU^{(l)}[i, i, :] = \vx_i^{(l)}$ and 0 in the other entries. Consider one additional layer of MPNN: 
\[ \vx_i^{(l + 1)}= u(\vx_i^{(l)}, \phi(\{m(\vx_i^{(l)}, \vx_j^{(l)}, \vy_{ij} )\}_{j \in N_i})) \]
and the following SMP layer: 
$$ \mU_i^{(l + 1)} = \textit{diag}(\tilde u(\mU_i^{(l)}, \tilde \phi( \{\tilde m(\bm{1 1}^T~ \mU_i^{(l)}, \bm{1 1}^T ~\mU_j^{(l)}, \vy_{ij} )\}_{j \in N_i}))), $$

where $\tilde m, ~\tilde \phi$ and $\tilde u$ respectively apply the functions $m, \phi$ and $u$ simultaneously on each line of the local context $\mU_i$.
As the only non-zero line of $\mU_i$ is $\mU_i[i, :]$,
$\bm{1 1}^T \mU_i^{(l)}$ replicates the $i$-th line of $\mU_i^{(l)}$ on all the other lines, so that they all share the same content $\vx_i^{(l)}$. After the application of the message passing functions $\tilde m, \tilde \phi$ and $\tilde u$, all the lines of $\mU_i$ therefore contain $\vx_i^{(l+1)}$. 

Finally, the function $\textit{diag}$ extracts the main diagonal of the tensor $\tU$ along the two first axes. Let $\delta_{i, j}$ be the function that is equal to 1 if $i=j$ and $0$, otherwise. We have: $\textit{diag}(\tU)[i, j, :] = \tU[i, j, :] ~ \delta_{i, j}$. Note that this function can equivalently be written as an update function applied separately to each node: $\textit{diag}(\mU_i)[j,:] = \mU_i[j, :] \delta_{i, j} $. We now have $\tU^{(l+1)}[i, i; :] = \vx_i^{l+1}$ and $\tU$ equal to $0$ on all the other entries, so that the induction hypothesis is verified at layer $l + 1$.
\end{proof}
As any MPNN can be computed by an SMP, we conclude that SMPs are at least as powerful as MPNNs.

\paragraph{SMP are strictly more powerful}

To prove that SMPs are strictly more powerful than MPNNs, we use a similar argument to \cite{chen2019equivalence, maron2019provably}:
\begin{lemma}
There is an SMP network which yields different outputs for the two graphs of Fig. \ref{fig:regular}, while any MPNN will view these graphs are isomorphic.
\end{lemma}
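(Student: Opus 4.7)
The plan is to exhibit the classical pair of $2$-regular graphs on six vertices, namely the $6$-cycle $C_6$ and the disjoint union $2\,C_3$ of two triangles, which is the customary example for showing the weakness of the Weisfeiler--Lehman test. Both graphs are $2$-regular with the same number of nodes and edges, so any colour refinement starting from a constant initial colouring produces, at every round, a single stable colour class: every node has exactly two neighbours of the current class, hence receives the same new colour as all others. Since any MPNN is at most as powerful as $1$-WL (as established by \citet{xu2018powerful,morris2019weisfeiler}), no MPNN can assign different graph-level embeddings to $C_6$ and $2\,C_3$.

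Next I would construct an explicit SMP that does distinguish the two graphs. The idea is to use the trivial parametrisation obtained by picking, at every layer, the message function $m(\mU_i,\mU_j,\vy_{ij}) = \mU_j$, the sum aggregator $\phi = \sum$, and the trivial update $u(\mU_i,\tilde\mU_i) = \tilde\mU_i$. These functions are all equivariant, so by Theorem~\ref{theorem:equivariance} the resulting network is a legitimate SMP. Starting from the initial local contexts $\mU_i^{(0)} = \bm{\mathds{1}}_i \in \R^{n\times 1}$, a straightforward induction on $l$ shows that
\[
    \mU_i^{(l)}[k] \;=\; (\mA^l)_{ik} \qquad \text{for every } v_i, v_k \in V,
\]
so that after $l$ layers each node's local context records the full column of $\mA^l$ associated with it. In particular, the diagonal entry $\mU_i^{(l)}[i]$ is the number of closed walks of length $l$ that start and end at $v_i$.

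I would then choose $l = 3$ and use an equivariant readout that, from each local context $\mU_i^{(3)}$, extracts the scalar $\mU_i^{(3)}[i]$ and sums the results over nodes (this is precisely the $\tU[i,i,:]$ extraction already used in the architecture of Figure~\ref{fig:architecture}, followed by graph-level summation). For $C_6$ the graph contains no triangles, so $(\mA^3)_{ii} = 0$ for every $i$ and the readout equals $0$. For $2\,C_3$ every node lies in a triangle, giving $(\mA^3)_{ii} = 2$ for every $i$ and a readout value of $12$. The two graphs are therefore mapped to distinct outputs by this SMP, completing the argument.

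There is no serious obstacle: the only point requiring mild care is to verify that the chosen message, aggregation and update functions satisfy the equivariance hypotheses of Theorem~\ref{theorem:equivariance}, and that the diagonal-extraction readout is itself equivariant (which follows because permuting nodes permutes both the row and the column indices of $\tU$ consistently). Everything else reduces to the elementary combinatorial fact that closed walks of length $3$ count triangles, which separates $C_6$ from $2\,C_3$.
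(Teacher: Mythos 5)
Your proposal is correct and follows essentially the same route as the paper: the paper's proof also takes the SMP with message $m(\mU_i,\mU_j)=\mU_j$, sum aggregation and trivial update, so that the local contexts compute powers of $\mA$, and then reads off $\mathrm{tr}(\mA^3)$ to separate the two regular graphs, while invoking the MPNN $\le$ WL equivalence for the negative direction. The only cosmetic difference is that you substitute the classical pair $C_6$ versus $2C_3$ for the specific graphs in Fig.~\ref{fig:regular}, which does not change the argument.
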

\begin{figure}[h]
	\centering
	\includegraphics[width=0.5\textwidth]{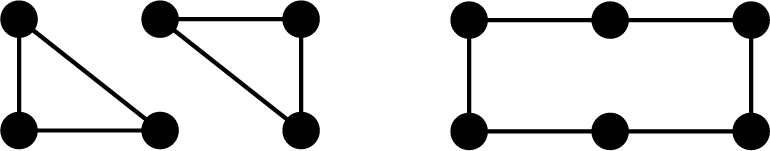}
	\caption{While MPNNs cannot distinguish between two regular graphs such as these ones, SMPs can.}
	\label{fig:regular}
\end{figure}
\begin{proof}
 The two graphs of 
Fig. \ref{fig:regular} are regular, which implies that they cannot be distinguished by the Weisfeiler-Lehman test or by MPNNs without special node features \citep{maron2019provably}. On the contrary, consider an SMP $f$ made of three layers computing $\mU_i^{(l+1)} = \sum_{v_i \in N_i} \mU_i^{(l)}$, followed by the trace of $\mU^{(3)}$ as a a pooling function. As each layer can be written $\mU^{(l+1)} = \mA \mU^{(l)}$ and $\mU^{(0)} = \mI_n$, we have $f(\mA) = \textit{tr}(\mA^3)$. In particular $f(\mA)=2$ for the graph on the left, while  $f(\mA)=0$ on the right.\end{proof}

\section{A more compact representation with graph coloring} \label{appendix:coloring}

In SMP, the initial local context is a one-hot encoding of each node: $\mU_i^{(0)} = \bm{\delta}_i \in \R^n$. When the graph diameter $\diam$ is large compared to the number of layers $L$, the memory requirements of this one-hot encoding can be reduced by
attributing the same identifiers to nodes that are far away from each other. In particular, no node should see twice the same identifier in its $L$-hop neighborhood. To do so, we propose to build a graph $G'$ where all $2L$-hop neighbors of $G$ are connected, and to perform a greedy coloring of $G'$ (Algorithm \ref{algo:coloring}). Although the number of colors $\chi$ used by the greedy coloring might not be optimal, this procedure guarantees that identifiers do not conflict.

\begin{algorithm}
\DontPrintSemicolon 
\KwIn{A graph $G = (V, E)$ with $n$ nodes, $L \in \sN$ (number of layers.)}
\KwOut{A binary matrix $\mU_i^0 \in \R^{n \times \chi}$, where $\chi$ is the number of colors.}
Create the graph $G' = (V, ~ \{(i, j), d(i, j\}) \leq 2L)$ \; 
$\vc \in \R^n \gets \textit{greedy\_coloring}(G')$ \;
\Return{$\textit{one\_hot\_encoding}(\vc)$}\;
\caption{Node coloring}
\label{algo:coloring}
\end{algorithm}

The one-hot encoding of the colors $\mU_i^0 \in \R^{\chi}$ is then used to initialize the local context of $v_i$. The only change in the SMP network is that in order to update the representation that node $i$ has of node $j$, we now update $\mU_i[c_j, :]$ instead of $\mU_i[j, :]$, where $c_j$ is the color associated to node $v_j$.
Note however that the coloring is only useful if the graph has a diameter $\diam > 2 L$. This is usually the case in geometric graphs such as meshes, but often not in scale-free networks.

\section{Proof of Proposition \ref{prop:2}} 
\label{proof:prop2}

We will prove by induction that any Fast SMP layer can be approximated by two blocks of PPGN. It implies that the expressive power of Fast SMP is bounded by that of PPGN.

Recall that a block of PPGN is parameterized as:
\[
\tT^{(l+1)} = m_4(m_3(\tT^{(l)}) \| m_1(\tT^{(l)}) ~@~ m_2(\tT^{(l)})),
\]
where $m_k$ are MLPs acting over the third dimension of $\tT \in \R^{n \times n \times c}$: $\forall (i, j), ~ m_k(\tT)[i, j, :] = m_k(\tT[i, j, :])$. Symbol $\|$ denotes concatenation along the third axis and $@$ matrix multiplication performed in parallel on each channel: $(\tT ~@~ \tT')[:, :, c] = \tT[:, :, c] ~\tT'[:, :, c]$. 

To simplify the presentation, we assume that:
\begin{itemize}[noitemsep, nolistsep]
    \item At each layer $l$, one of the channels of $\tT^{(l)}$ corresponds to the adjacency matrix $\mA$, another contains a matrix full of ones $\bm{1}_n \bm{1}_n^\top$ and a third the identity matrix $\mI_n$, so that each PPGN layer has access at all times to these quantities. These matrices can be computed by the first PPGN layer and then kept throughout the computations using residual connections.
    \item The neural network can compute entry-wise multiplications $\odot$. This computation is not possible in the original model, but it can be approximated by a neural network.
    \item $\tU$ and $\tT$ have only one channel (so that we write them $\mU$ and $\mT$). This hypothesis is not necessary, but it will allow us to manipulate matrices instead of tensors.
\end{itemize}

\paragraph{Initialization}
Initially, we simply use the same input for PPGN as for SMP ($\mU^{(0)} = \mT^{(0)} = \mI_n$). 
\paragraph{Induction}
Assume that at layer $l$ we have $\mU^{(l)} = \mT^{(l)}$. Consider a layer of Fast SMP:
\begin{equation*}
\hspace{-1mm}
\mU_i^{(l+1)} 
= \frac{1}{\davg} \left(\sum_{v_j \in N_i} \hat{\mU}^{(l)}_j ~+~  \hat\mU^{(l)}_i \mW_4^{(l)} \odot \sum_{v_j \in N_i} \hat\mU^{(l)}_j \mW_5^{(l)} \right),
\end{equation*}
where
\[
\hat{\mU}_i^{(l)} = \mU_i^{(l)} ~ \bm{W}_1^{(l)} + \frac{1}{n} \bm{1}_n ~  \bm{1}_n^T  ~ \mU_i^{(l)} ~ \bm{W}_2^{(l)} + \bm{1}_n (\vc^{(l)})^\top + \frac{1}{n} \bm{\mathds{1}}_i \bm{1}^T \mU_i^{(l)}~ \bm{W}_3^{(l)}. 
\]
A first PPGN block can be used to compute $\hat \mU_i^{(l)}$ for each node. This block is parametrized by:
\begin{align*}
    m_1(\mU^{(l)}) &= \frac{1}{n} ~\bm{1_n}~ \bm{1_n}^T,
    \qquad &&m_2(\mU^{(l)}) = \mU^{(l)}, \\
    m_3(\mU^{(l)}) &= \mU^{(l)} ~\mW_1 + 1 \vc^T + (\mI_n ~\odot~ \mU^{(l)}) ~ \mW_3,
    \qquad &&m_4(\hat \mU, ~\tilde\mU) = \hat \mU + \tilde \mU ~\mW_2 + \mI_n ~\odot~ (\tilde\mU ~\mW_3)
\end{align*}

The output of this block exactly corresponds to $\hat \mU^{(l)}$.
Then, a second PPGN block can be used to compute the rest of the Fast SMP layer. It should be parametrized as:
\begin{align*}
    &m_1([\hat\mU^{(l)}]) = \mA ~/~ \bar d,
    \qquad && m_2([\hat\mU^{(l)}]) = \hat\mU^{(l)}, \\
    & m_3([\hat\mU^{(l)}]) = \hat\mU^{(l)}, 
    \qquad &&m_4(\hat\mU^{(l)}, ~\tilde\mU) = \tilde\mU + (\hat\mU^{(l)} ~ \bm{W_4}) ~\odot~ (\tilde\mU ~\bm{W_5})
\end{align*}
By plugging these expressions into the definition of a PPGN block, we obtain that the output of this block corresponds to $\mU^{(l+1)}$ as desired.

\newpage
\section{Comparison between SMP, Provably powerful graph networks and Ring-GNN} \label{appendix:provably-smp}

\begin{figure}[h]
	\makebox[\textwidth][c]{%
	\includegraphics[width=1.00\textwidth]{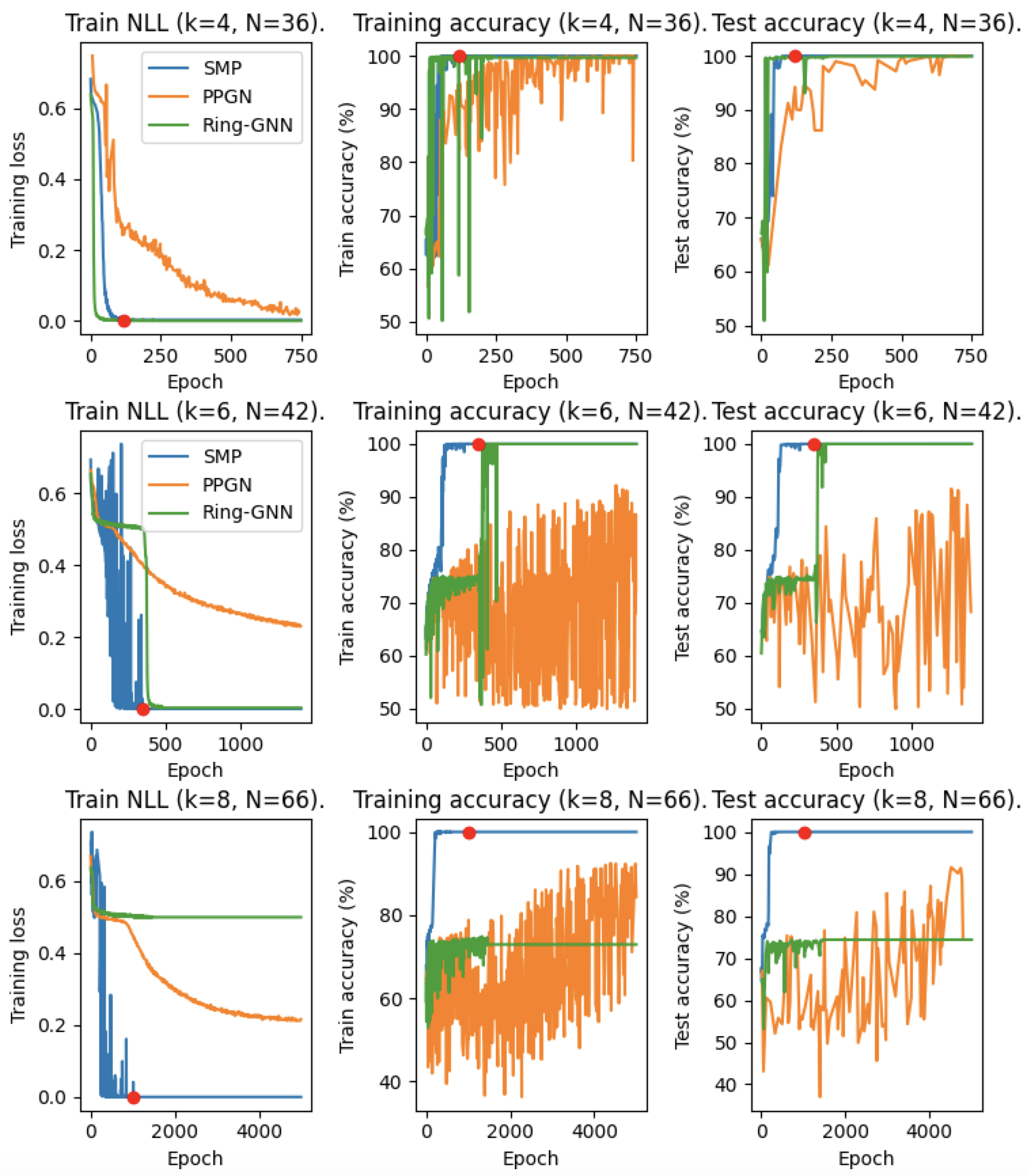}
	}
\caption{Training curves of SMP, PPGN and Ring-GNN for different cycle lengths $k$. NLL stands for negative log-likelihood. Red dots indicate the epoch when SMP training was stopped. The training loss sometimes exhibits peaks of very high value which last one epoch -- they were removed for readability. Provably powerful graph networks are much more difficult to train than SMP: their failure is not due to a poor generalization, but to the difficulty of optimizing them. Ring-GNN works well for small graphs, but we did not manage to train it with the largest graphs (66 or 72 nodes). We attribute this phenomenon to an inductive bias that is less suited to the task. PPGN and SMP training time per epoch are approximately the same, while RING-GNN is between two and three times slower.} \label{fig:smp-provably}
\end{figure}

\end{document}